\documentclass{article}

\usepackage[preprint]{neurips_2026}

\usepackage{microtype}
\usepackage{graphicx}
\usepackage{subcaption}
\usepackage{booktabs} 
\usepackage{hyperref}
\usepackage{url}
\usepackage{amsfonts}       
\usepackage{multirow}
\usepackage{array} 
\usepackage{amsmath}
\usepackage{algorithm}
\usepackage{algorithmic}
\usepackage{amssymb}
\newcommand{\PM}{PEML}
\usepackage{wrapfig}
\usepackage{caption}  
\usepackage{array} 
\usepackage{amsthm}

\usepackage{enumitem}

\newtheorem{theorem}{Theorem}[section]

\newtheorem{proposition}[theorem]{Proposition}
\newtheorem{assumption}{Assumption}

\usepackage[utf8]{inputenc} 
\usepackage[T1]{fontenc}    
\usepackage{nicefrac}       
\usepackage{xcolor}         

\title{PEML: Parameter-efficient Multi-Task Learning with Optimized Continuous Prompts}

\author{%
  Anjir Ahmed Chowdhury\thanks{Use footnote for providing further information
    about author (webpage, alternative address)---\emph{not} for acknowledging
    funding agencies.} \\
  Department of Computer Science\\
  University of Houston\\
  \texttt{aachowd4@cougarnet.uh.edu} \\
  \and
  Syed Zawad \\
  IBM Research \\
  \texttt{szawad@ibm.com} \\
  \AND
  Xiaolong Ma \\
  Argonne National Laboratory \\
  \texttt{xma@anl.gov} \\
  \and
  Xu Dong \\
  Department of Computer Science\\
  University of Houston\\
  \texttt{xdong24@cougarnet.uh.edu} \\
  \and
  Feng Yan \\
  Department of Computer Science\\
  University of Houston \\
  \texttt{fyan5@central.uh.edu} \\
}

\begin{document}

\maketitle

\begin{abstract}
Parameter-Efficient Fine-Tuning (PEFT) is widely used for adapting Large Language Models (LLMs) for various tasks. 
Recently, there has been an increasing demand for fine-tuning a single LLM for multiple tasks because it requires overall less data for fine-tuning thanks to the common features shared among tasks. More importantly, LLMs are resource demanding and deploying a single model for multiple tasks facilitates resource consolidation and consumes significantly less resources compared to deploying individual large model for each task. 
Existing PEFT methods like LoRA and Prefix Tuning are designed to adapt LLMs to a specific task.
LoRA and its variation focus on aligning the model itself for tasks, overlooking the importance of prompt tuning in multi-task learning while Prefix Tuning only adopts a simple architecture to optimize prompts, which limits the adaption capabilities for multi-task. 
To enable efficient fine-tuning for multi-task learning, it is important to co-optimize prompt optimization and model adaptation. 
In this work, we propose a Parameter-Efficient Multi-task Learning (\PM), which employs a neural architecture engineering method for optimizing the continuous prompts while also performing low-rank adaption for model weights.   
We prototype \PM\ by creating an automated framework for optimizing the continuous prompts and adapting model weights. 
We evaluate \PM\ against state-of-the-arts multi-task learning methods MTL-LoRA, MultiLoRa, C-Poly, and MoE, on the GLUE, SuperGLUE, Massive Multitask Language Understanding, and commonsense reasoning benchmarks. The evaluation results present an average accuracy improvement of up to 6.67\%, with individual tasks showing peak gains of up to 10.75\%.
\end{abstract}

\section{Introduction}

Large language models (LLMs) have made significant advancements in various natural language processing tasks such as machine translation \cite{Lewis2019}, text generation \cite{Chung2022}, and code analysis \cite{Wang2021, Qin2024}. Traditional task-specific fine-tuning (FT) becomes increasingly computationally expensive as LLMs continue to grow in size. It requires adjusting all of the model's parameters, which makes it difficult to scale \cite{Devlin2018, Howard2018, Colin2020}, and thus motivated parameter-efficient fine-tuning (PEFT) methods that only require learning a small set of additional parameters for each task \cite{Neil2019, Brian2021}. These methods \cite{Pfeiffer2020, Edward2021} have been widely adopted as they offer comparable performance to full fine-tuning while significantly reducing computational overhead \cite{Neil2019, Brian2021, Ding2023}.

\begin{wrapfigure}{r}{0.35\textwidth}
    \centering
    \scriptsize
    \vspace{-5pt}
    \includegraphics[width=0.35\textwidth]{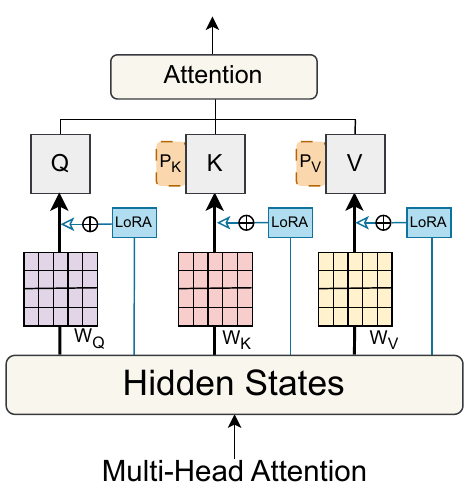}
    \caption{Overview of \PM. Hidden states are projected into queries, keys, and values using $W_Q$, $W_K$ and $W_V$, with LoRA applied to key and value projections. Learnable prefix vectors $P_K$ and $P_V$ are prepended to the key and value sequences, enabling task-specific conditioning during multi-head attention.}\
    \vspace{-10pt}
    \label{fig2}
\end{wrapfigure}


LoRA \cite{Edward2021} and Prefix Tuning \cite{Xiang2021} are among prominent PEFT methods for adapting models to a single task. LoRA enhances efficiency by introducing trainable low-rank matrices into a subset of the model's weights during training, enabling learning directional updates in the parameter space. Prefix Tuning improves adaptability by generating task-specific continuous vectors \cite{Pengfei2021, Xiang2021} to the input embeddings before each transformer layer to steer the model's generation process without modifying the original model parameters. Such learned prefix vectors and low-rank matrices enable efficient adaptation to new and related tasks with minimal additional training \cite{Brian2021, Xiao2022, Xiao2021}.

LoRA and Prefix Tuning, however, face challenges when applied to multi-task training. 
First, deploying many task-specific adapters (e.g., prefix vectors or LoRA matrices) increases memory usage and makes resource management complex.
Frequent switching between adapters incurs computational costs due to the need for adapter loading and model reconfiguration. 
Therefore, it is inefficient and costly for inference serving deployment.
In addition, individual task training prevents knowledge sharing across tasks, missing opportunities to leverage insights from one task to improve others \cite{lopes2023, Amir2020}. Such isolated task training limits potential performance gains from inter-task knowledge-sharing. 

Lately, there are efforts to adapt PEFT methods for multi-task learning. MPT \cite{Zhen2023} learns a shared transferable prompt distilled from multiple task-specific prompts and applies multiplicative low-rank adaptations for downstream task specialization. However, it requires pre-training individual teacher prompts for each task. MultiLoRA \cite{wang2023Multilora} extends LoRA by horizontally scaling modules, dividing them into parallel sub-modules with separate scaling factors. Yet, this approach increases VRAM usage due to activation caching for multiple parallel modules. C-Poly \cite{wang2023C} employs a skill-based framework that merges shared and task-specific low-rank parameters using a learned skill matrix, but its fixed architecture limits generalization. 
MTL-LoRA \cite{yang2025} introduces task-adaptive parameters that reduce interference in shared low-dimensional spaces. However, it requires task-specific routing during inference, which complicates inference deployment and resource management. Despite these advancements, most approaches focus on extending LoRA but overlook the critical aspect of prompt alignment in multi-task environment \cite{shen2024, xin2024}. Aligned prompts can significantly improve model generalization during multi-task \cite{xu2022match} training. Motivated by this observation, we explore integrating prompt alignment into PEFT methods to enhance multi-task performance.


To this end, we propose PrefixNAS which 
generates and optimizes a single, unified continuous prompt architecture through neural architecture search (NAS) for better alignment of the model’s behavior in multi-task learning. PrefixNAS captures task-relevant features and relationships, allowing the prompt encoder to leverage shared knowledge efficiently while preserving task-specific distinctions ($\sim$ see Appendix \ref{lowdata}). Additionally, PrefixNAS automatically tunes both the prefix architecture and its hyperparameters, eliminating the need for manual adjustments when adapting to new tasks. 
We further develop a Parameter-efficient Multi-Task Learning framework (\PM) to integrate PrefixNAS enabled prompt optimization into LoRA for model alignment. 
Figure \ref{fig2} shows the structure of \PM. LoRA matrices are applied to all projection layers, while prefix vectors are added in parallel to only the key and value projections of each attention head. This combination allows the model to adapt to new tasks, with LoRA handling model adaptation and prefix vectors handling input alignment. Such an integrated design also facilitates efficient inference deployment as only one adapter needs to be deployed and does not require adapter switching.

We formulate multi-task learning as a joint optimization problem through LoRA and PrefixNAS and conduct theoretical analysis on \PM.
We evaluate \PM\ by comparing it with state-of-the-arts multi-task learning methods such as MTL-LoRA, MultiLoRa, C-Poly, and MoE~\cite{yang2025, Noam2017, wang2023Multilora, wang2023C}, on the GLUE\cite{Alex2018}, SuperGLUE\cite{Alex2019}, Massive Multitask Language Understanding \cite{hendryckstest2021} and commonsense reasoning benchmarks \cite{bisk2020piqa,sakaguchi2020winogrande,mihaylov2018openbookqa,zellers2019hellaswag,clark2018think}. 
The evaluation results demonstrate an average accuracy improvement of up to 6.67\%, with individual tasks showing peak gains of up to 10.75\%.

\section{Related Work}
Approaches to parameter-efficient fine-tuning can be broadly classified into three categories: adapter-based methods, prompt-based methods (e.g., Prefix Tuning), and low-rank adaptation methods, with most recent developments primarily focusing on Prefix Tuning and LoRA.

\textbf{Adapter-based methods} \cite{Neil2019,Junxian2021,Rabeeh2021} insert small, trainable modules into a pretrained model while keeping the rest of the model frozen, capturing task-specific information with minimal added parameters. It introduce additional layers, leading to parameter redundancy, whereas LoRA focuses on low-rank updates without introducing additional layers

\textbf{Prefix Tuning} \cite{Xiang2021} is a specialized form of prompt-based fine-tuning that focuses on prepending learnable continuous vectors, known as "prefixes," to the inputs. It involves the optimization of continuous vectors that shift the model towards specific downstream tasks. Prefix Tuning updates only the prefixes during fine-tuning, keeping the base model parameters frozen. This makes it significantly more memory efficient and scalable, especially for large-scale models. However, Prefix tuning remains sensitive to initialization, which may limit its adaptability in multitask settings.

\textbf{LoRA} \cite{Edward2021} reduces the number of trainable parameters by applying low-rank decomposition to simulate weight updates in frozen models, enabling efficient fine-tuning without increasing inference costs. Several variants have been proposed to further enhance its efficiency and applicability. \textbf{AdaLoRA} \cite{zhang2023} leverages singular value decomposition (SVD) to prune less significant components. while \textbf{rsLoRA} \cite{kalajdzievski2023} introduces a scaling factor to stabilize the rank. \textbf{DoRA} \cite{liu2024} implements dynamic optimization of LoRA parameters during training to improve adaptability across learning tasks. In the context of Stable Diffusion, \textbf{Yeh et al.} \cite{yeh2024} proposed a unified LoRA framework that applies different combinations of LoRA methods for various tasks. \textbf{VeRA} \cite{kopiczko2024} introduces scaling vectors that adjust pairs of frozen random matrices shared across layers, further optimizing parameter efficiency. 

\textbf{Multi-task learning (MTL)} trains models to solve multiple related tasks simultaneously by sharing parameters across tasks \cite{Yu2020, Ruder2018}. It often involves fine-tuning on several tasks before transferring knowledge to a new one \cite{Vu2020, Colin2020, Aghajanyan2021}. Building on the foundational PEFT techniques, recent innovations have proposed MTL-specific adaptations designed to minimize interference between tasks while maintaining parameter efficiency. One approach, \textbf{MPT} \cite{Zhen2023}, learns a shared transferable prompt distilled from multiple task-specific prompts. However, its major drawback is the need to pre-train individual teacher prompts for each source task before distilling knowledge into a shared prompt, which introduces significant computational overhead. \textbf{UniPELT} \cite{mao2022unipelt} integrates Prefix-Tuning, LoRA, and adapters within a single framework and uses a gating mechanism to select among these modules. However, it is primarily designed for single-task and it performs well in low-data settings but its gains reduce under full-data training. \textbf{MTL-LoRA} \cite{yang2025} extends the original LoRA framework by introducing task-adaptive parameters that preserve task-specific information and reduce interference in shared low-dimensional spaces, enhancing multi-task adaptation. Unlike standard LoRA, which merges adapters into the base model, MTL-LoRA requires task-specific routing during inference, resulting in added latency. \textbf{MultiLoRA} \cite{wang2023Multilora} addresses the limitations of LoRA’s reliance on top singular vectors by horizontally scaling LoRA modules and diversifying their initialization, resulting in more balanced and effective adaptation across diverse tasks. However, it introduces a linear increase in VRAM usage during training due to the activation caching required for multiple parallel LoRA modules. \textbf{Customized Polytropon (C-Poly)} \cite{wang2023C} is a modular, skill-based framework that enhances multi-task learning by combining shared and task-specific low-rank parameters through a learned skill assignment matrix. Theoretically, It struggles with unseen tasks, relying on it's fixed architecture with static skill modules limits generalization. Furthermore, existing approaches lack a mechanism to automatically adapt their architecture or hyperparameters to new data, requiring manual adjustments for each benchmarks. 

Additional related work on adapter and prompt-based methods, more recent LoRA and Prefix Tuning variants, and hypernetwork-based approaches is provided in Appendix~\ref{sec:additional_related}.

\section{\PM}
\PM\ strategically handle the challenges of prompt alignment and low-rank model adaptability in multi-task learning by integrating standard LoRA with PrefixNAS. Unlike existing approaches that extend LoRA (such as MTL-LoRA, MultiLoRA) without considering for prompt alignment, \PM\ introduces a more cohesive framework where PrefixNAS dynamically adjusts the prefix structure based on the specific requirements. This adaptability allows \PM\ to maintain a shared prefix module throughout training and inference, reducing the reliance on static architecture or any pre-trained teacher prefixes. Meanwhile, LoRA operates in parallel during training to effectively optimize low-rank adaption of the model without introducing a linear increase in VRAM usage. After training, LoRA is merged with the base model, resulting in a leaner architecture that only retains the adaptive PrefixNAS module. PrefixNAS is allowed to continuously refine its structure based on evolving task demands and ensuring optimal prompt alignment and model generalization.

\begin{figure*}[!ht]
\centering
    \includegraphics[width=0.99\textwidth]{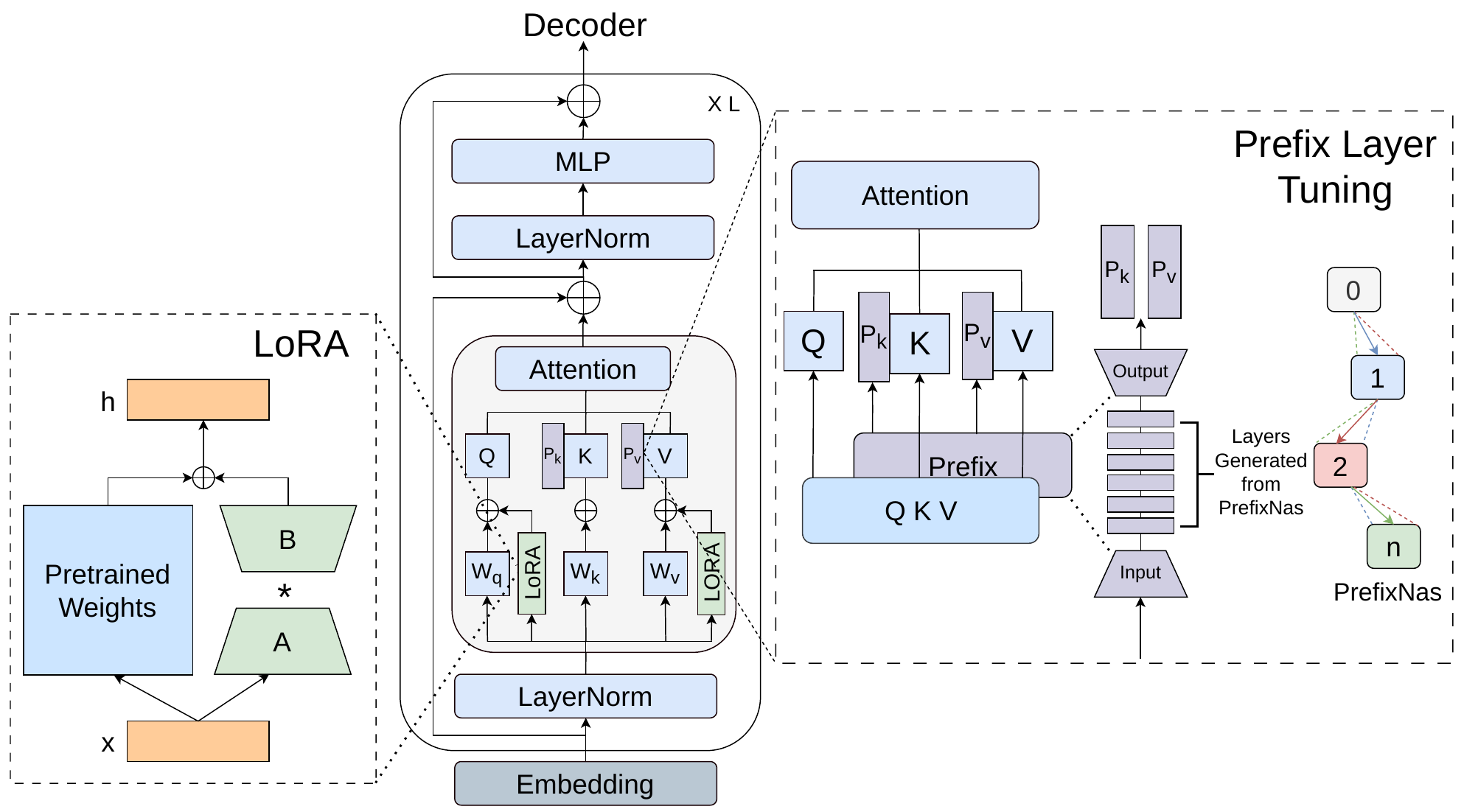}
    \caption{Unified view of \PM\ method. The left side illustrates LoRA, where matrix B is set to 0 and matrix A follows a normal distribution N(0, $\sigma^2$). The right side represents PrefixNAS, showcasing the optimal architecture derived from the search process.}
    \label{fig1}
\end{figure*}

\begin{minipage}[t]{0.52\textwidth}

\PM\ combines LoRA with PrefixNAS which use a gradient-based NAS approach that’s built on continuous relaxation techniques. Both techniques optimize parallelly during training, as shown in Figure \ref{fig1}. This allows the model to adjust shared features and dynamically generate unified prefix at the same time. Moreover, this interaction can introduce instability; we therefore provide a convergence analysis Appendix \ref{ca} showing that the joint optimization is well-behaved and convergent. Also a detailed breakdown of how the framework works is provided in Algorithm \ref{algo1}. In Algorithm \ref{algo1} the base model $\Phi$ is initialized with frozen pre-trained weights, while LoRA parameters $B, A$ and PrefixNAS architecture parameters $\alpha$ are trained concurrently. In each iteration, PrefixNAS generates candidate prefix architectures $\mathcal{A}_i(\alpha)$, which are used to construct shared prefix $\mathbf{P}_i$. That unified prefix are concatenated with inputs and passed through the model adapted by LoRA parameters, producing predictions and computing a combined loss and architecture regularization terms. PrefixNAS parameters update via gradient descent, followed by pruning weak architectures. The final model $\Phi_{\text{final}}$ consists of the optimized LoRA weights and a unified Prefix architecture.
\end{minipage}
\hfill
\begin{minipage}[t]{0.46\textwidth}
\vspace{-17pt}
\begin{algorithm}[H]
\caption{\PM: Joint Optimization with LoRA \& PrefixNAS}
\label{algo1}
\scriptsize
\begin{algorithmic}
\footnotesize
\STATE $\Phi \gets \text{pre-trained model}$ \\
\STATE $\theta \gets \text{base weights [Frozen]}$ \\
\STATE $\{D_i\}_{i=1}^{n} \gets \text{datasets}$ \\
\STATE $h \gets \text{PrefixNAS search space}$ \\
\STATE $s \gets \text{PrefixNAS operations}$ \\

\STATE Initialize LoRA params $B,A$ and PrefixNAS params $\alpha$ \\
\STATE \textbf{Parallel Joint Optimization} \\
\FOR{joint iteration $t=1$ \textbf{to} $T$}
    \STATE Generate candidate prefix architectures: $\mathcal{A}_i(\alpha) \gets \text{PrefixNAS}(s)$ \\
    \STATE $\mathbf{P}_i \gets \text{SharedPrefix}(\mathcal{A}_i(\alpha), h)$ \COMMENT{Unified prefix}
    
    \FOR{each dataset $D_i$}
        \STATE $\tilde{X}_i = \mathbf{P}_i \oplus X_i$ \COMMENT{Input with a unified prepend prefix}
        \STATE $\theta' \gets \theta + B \cdot A$ \COMMENT{LoRA adaptation}
        
        \STATE Forward pass: $\hat{y} = \Phi(\tilde{X}_i; \theta')$ \\
        \STATE Compute loss: $\mathcal{L}_t = \mathcal{L}(\hat{y}, y_i) + \lambda\mathcal{R}(\alpha)$ \\
        
        \STATE \textbf{Simultaneous Updates:}
        \STATE $\triangleright$ Update LoRA params: $B,A \leftarrow B,A - \eta\nabla_{B,A}\mathcal{L}_t$ \\
        \STATE $\triangleright$ Update NAS params: $\alpha \leftarrow \alpha - \eta\nabla_{\alpha}\mathcal{L}_t$ \\
    \ENDFOR
    
    \STATE Prune weak architectures via PrefixNAS \\
\ENDFOR
\STATE \textbf{return} $\Phi_{\text{final}}(\theta', \alpha^*)$ \COMMENT{Jointly optimized model}
\end{algorithmic}
\end{algorithm}

\end{minipage}

\subsection{Problem Statement}
Let $\mathcal{D} = \{D_1, D_2, \ldots, D_n\}$ denote multi-task datasets from the benchmarks, where each dataset $D_i = \{(x_{ij}, y_{ij})\}_{j=1}^{m_i}$ contains input-output pairs for task $T_i$. Mini-batches are constructed through parallel sampling:
\begin{equation}
B_k = \bigcup_{i=1}^n \{(x_{i1}, y_{i1}), \ldots, (x_{ib_i}, y_{ib_i})\} \quad \text{with} \quad b_i = \lfloor \gamma m_i \rfloor
\end{equation}
The model $f_\theta$ with base parameters $\theta$ undergoes joint optimization through LoRA and PrefixNAS. LoRA modifies low-rank matrices $B, A \in \mathbb{R}^{d \times r}$ with $r \ll d$. The adapted parameter set $\theta'$ is expressed as:
\begin{equation}
\theta' = \theta + \Delta\theta = \theta + BA^\top
\label{eq2}
\end{equation}
PrefixNAS searches for a single learnable prefix matrix $P \in \mathbb{R}^{l \times d}$, concatenated with the input sequence $X_i$. The transformed input $\tilde{X}_i$ is given by:
\begin{equation}
\tilde{X}i = \mathcal{A}_\alpha(P) \oplus X_i
\label{eq3}
\end{equation}
where $\mathcal{A}_\alpha$ is an architecture search function parameterized by $\alpha$. The joint loss function for \PM\ is defined as:
\begin{equation}
\label{eq:4}
\mathcal{L}_{\text{joint}} = \frac{1}{n}\sum_{i=1}^n \frac{1}{|B_k^{(i)}|}\sum_{(x,y)\in B_k^{(i)}} \mathcal{L}(f_{\theta'}(\tilde{x}), y) + \lambda \mathcal{R}(\alpha)
\end{equation}
Where the joint loss function $\mathcal{L}_{\text{joint}}$ combines task loss and architecture regularization. The task loss averages over mini-batches and samples, expressed as $\mathcal{L}(f_{\theta'}(\tilde{x}), y)$. The regularization term $\lambda \mathcal{R}(\alpha)$ is applied to the architecture parameters $\alpha$ and is designed to encourage sparse and well-structured operation selection by discouraging high-entropy architectural distributions, scaled by $\lambda$. We define the regularization term as:

\begin{equation}
\mathcal{R}(\alpha) = - \sum_{o \in \mathcal{O}} p_o \log p_o, \quad p_o = \text{softmax}(\alpha_o)
\end{equation}

During training, $\theta$ remains frozen, and the updates only $B$, $A$, and $\alpha$ alongside with $\mathcal{A}(P)$ . After training, the final model integrated with \PM\ is expressed as:
\begin{equation}
f_{\theta_{\text{final}}} = f_{\theta + BA^\top} \circ \mathcal{A}_{\alpha^*}(P)
\end{equation}
where $\alpha^*$ denotes the optimized architecture parameters obtained through PrefixNAS.

\subsection{PrefixNAS Optimization}

PrefixNAS generates a single optimized architecture through differentiable search, enabling multi-task optimization beyond static embedding layers. For each task $T_i$, the prefix architecture combines candidate operations via continuous relaxation:
\begin{equation}
\mathcal{A}_i(\alpha_i) = \sum_{j=1}^k \frac{\exp(\alpha_{ij})}{\sum_{m=1}^k \exp(\alpha_{im})} \cdot o_j(P_i)
\label{eq:nas_arch}
\end{equation}
where $\alpha_i \in \mathbb{R}^k$ represents learnable architecture parameters, $P_i$ is the search prefixes, and $\{o_j\}_{j=1}^k$ denotes the set of candidate operations. 
After convergence, the final architecture is obtained by selecting the dominant operation:
\begin{equation}
\hat{\mathcal{A}}_i = o_{\underset{j}{\text{argmax}} \ \alpha_{ij}} 
\label{eq:7}
\end{equation}

PrefixNAS generates a single optimized architecture shared across all tasks through differentiable search, enabling uniform optimization without inter-task interference.

\subsection{\PM\ Optimization}

\PM\ integrated on a pre-trained model $\Phi(X;\theta)$ with frozen base parameters $\theta$. The adaptation combines low-rank updates $\Delta = BA^\top$ where $B, A \in \mathbb{R}^{d \times r}$, and unified prefix $P_i$ generated through PrefixNAS with architecture parameters $\alpha$. 
The unified objective maximizes the log-likelihood with architectural regularization:
\begin{equation}
\max_{\Delta,\alpha} \sum_{i=1}^n \sum_{t=1}^{|Y_i|} \log p_{\theta'}(y_{i,t}|\tilde{X}_i,y_{i,<t}) - \lambda \mathcal{R}(\alpha)
\end{equation}

\subsection{Hyperparameter Optimization}

\PM\ performs bi-level optimization where the inner loop optimizes the architectural parameters using PrefixNAS, and the outer loop optimizes hyperparameters through Tree-structured Parzen Estimator (TPE). Let \( \mathbf{h} = \{h_1, ..., h_n\} \) represent the hyperparameters, and \( \alpha = \{\alpha_i\}_{i=1}^k \) denote the PrefixNAS architecture parameters. 

\textbf{Inner Loop (PrefixNAS Optimization)} searches for the optimal prefix architecture by evaluating multiple configurations and selecting those that maximize the objective function:

\begin{equation}
\alpha_t \sim p_t(\alpha)
\end{equation}

For each sampled \( \alpha_t \), the model is trained with the prefix defined by \( \mathcal{A}_{\alpha_t}(P) \) and evaluated using:
\vspace{-3pt}
\begin{equation}
f(\alpha_t) = \frac{1}{m}\sum_{j=1}^m A(\Phi(X_j; \theta' + \Delta, \mathcal{A}_{\alpha_t}(P)))
\end{equation}

\textbf{Outer Loop (Hyperparameter Optimization with TPE)} samples hyperparameters and guiding the search based on previous evaluations for each trial \( t \):

\begin{equation}
\mathbf{h}_t \sim p_t(\mathbf{h})
\end{equation}

The objective function for the outer loop becomes:
\vspace{-3pt}
\begin{equation}
f(\mathbf{h}_t, \alpha^*) = \frac{1}{m}\sum_{j=1}^m A(\Phi(X_j; \theta' + \Delta, \mathcal{A}_{\alpha^*}(P), \mathbf{h}_t))
\end{equation}

TPE updates the sampling distributions for hyperparameters based on the evaluation results, while the architecture parameters are refined through PrefixNAS:
\vspace{-3pt}
\begin{equation}
p_{t+1}(\mathbf{h}) \propto \exp(f(\mathbf{h}_t, \alpha^*)), \quad p_{t+1}(\alpha) \propto \exp(f(\alpha_t))
\end{equation}
The optimization process continues iteratively, refining both hyperparameters and prefix configurations to converge to the optimal combination \( (\mathbf{h}^*, \alpha^*) \), defined as:

\begin{equation}
\mathbf{h}^*, \alpha^* = \underset{\mathbf{h},\alpha}{\text{argmax}} \ f(\mathbf{h}, \alpha)
\end{equation}

\section{Experiments}

\subsection{Models \& Dataset}

We evaluate \PM\ using T5-Large (770M) \cite{Colin2020}, FLAN-T5-Large \cite{chung2024}, LLaMA-7B \cite{Touvron2023} and LLaMA2-7B \cite{Touvron2023}. Our experiments span across \textbf{GLUE} \cite{Alex2018} (SST-2 \cite{socher2013}, COLA \cite{warstadt2018}, STS-B \cite{Daniel2017}), \textbf{SuperGLUE} \cite{Alex2019} (RTE \cite{Dagan2006}, Boolq \cite{Christopher2019}, WIC \cite{Mohammad2018}), \textbf{MMLU} \cite{hendryckstest2021}, and commonsense reasoning tasks (PIQA \cite{bisk2020piqa}, SIQA \cite{sap2019socialiqa}, Winogrande \cite{sakaguchi2020winogrande}, OBQA \cite{mihaylov2018openbookqa}, HellaSwag \cite{zellers2019hellaswag}, ARC \cite{clark2018think}).

\subsection{Experimental Setup}

In this study, we implemented various PEFT methods using the Hugging Face {\fontfamily{qcr}\selectfont PEFT} \cite{peft} library, including PreEmbedd, PrefixNAS, and LoRA variants like DoRA and AdaLoRA. PreEmbedd consists only an embedding layer and an output layer, without any intermediate layers, virtual tokens is set to {\fontfamily{qcr}\selectfont 20}, and the learning rate is fixed at {\fontfamily{qcr}\selectfont 1e-3}. The LoRA and AdaLoRA configurations are as follows: rank {\fontfamily{qcr}\selectfont r = 16}, {\fontfamily{qcr}\selectfont LoRA\_alpha = 32}, and {\fontfamily{qcr}\selectfont LoRA\_dropout = 0.1}. We also provide huggingface modularity to PrefixNAS. It defines operations in the search space $O$ as linear transformations with dimensions of {\fontfamily{qcr}\selectfont (1024 x 1024)}. Each transformation is associated with an activation function {\fontfamily{qcr}\selectfont (ReLU, Tanh, Leaky ReLU, or GELU)}, {\fontfamily{qcr}\selectfont dropout layers = [0.1, 0.3, or 0.5]}, and {\fontfamily{qcr}\selectfont layer normalization}. PrefixNAS generates {\fontfamily{qcr}\selectfont n = 6} layers between an embedding layer and a fixed output layer. TPE \cite{watanabe2023tree} is integrated with the PrefixNAS framework to refine hyperparameter search. The search required approximately 2 hours on 8× A100 GPUs for LLaMa variants (16 GPU-hours, $\sim$1.1 PFLOPs) and around 30 minutes on 8× A100 GPUs for T5-large variants (4 GPU-hours, $\sim$0.28 PFLOPs). This search process is a one-time effort for each benchmark. The learning rate is sampled from a logarithmic uniform distribution between {\fontfamily{qcr}\selectfont 0.001} and {\fontfamily{qcr}\selectfont 0.02}, with a base step of {\fontfamily{qcr}\selectfont 5e-5}. The prefix length varies from {\fontfamily{qcr}\selectfont 5} to {\fontfamily{qcr}\selectfont 50}. A total of {\fontfamily{qcr}\selectfont n=100} trials are conducted, each running for a maximum of {\fontfamily{qcr}\selectfont 150 epochs}. The early stopping function is applied to terminate training after {\fontfamily{qcr}\selectfont 25 epochs} without improvement in average accuracy. All experiments are conducted with three independent runs and results are reported as the average across runs. {\fontfamily{qcr}\selectfont Ray Tune} \cite{liaw2018tune} framework is used to implement TPE. For efficient multi-GPU training, the training is distributed across {\fontfamily{qcr}\selectfont8 NVIDIA A100 40 GB GPUs} using huggingface {\fontfamily{qcr}\selectfont Accelerate} \cite{accelerate} library.

\subsection{Results}
\subsubsection{General Language Understanding}
As shown in Table \ref{table:1}, \PM\ improves the average accuracy by 3.59\% on GLUE benchmark compared to standalone LoRA, while it shows a smaller improvement of 0.71\% over standalone AdaLoRA. Combining PreEmbedd with LoRA or AdaLoRA showed almost the same performance as using LoRA or AdaLoRA alone which proves our hypothesis that LoRA may limit the effectiveness of prompt alignment. PrefixNAS addresses this issue by optimizing the prefix architecture to better align prompts.
\begin{table}[!ht]
\begin{center}
\caption{Performance of various PEFT methods, including PreEmbedd, LoRA, AdaLoRA, and their combinations with \PM\, tested on the T5-large model across seven GLUE tasks.}
\scriptsize
 \begin{tabular} { >{\centering\arraybackslash} p{0.27314\linewidth} |   >{\centering\arraybackslash}p{0.025\linewidth} >{\centering\arraybackslash}p{0.04\linewidth} >{\centering\arraybackslash}p{0.025\linewidth} >{\centering\arraybackslash}p{0.034\linewidth} >{\centering\arraybackslash}p{0.025\linewidth} >{\centering\arraybackslash}p{0.035\linewidth} >{\centering\arraybackslash}p{0.055\linewidth} |
 >{\centering\arraybackslash}p{0.055\linewidth}}
 \hline
    Peft Techniques  & SST2 &MRPC	&RTE &COLA	&QQP	&WNLI	&QNLI &AVG \\ [1ex]
  \hline
  \hline
    PreEmbedd &  0.895 & 0.799 & 0.893  & 0.753 & 0.962 & 0.734 & 0.879 & 0.845 \\ [0.8ex]        

    LoRA     &  0.954 & 0.815 & \textbf{0.901} & 0.779 & 0.935 & 0.765 & 0.943 & 0.870 \\ [0.8ex]

    LoRA-PreEmbedd &  0.966 & 0.851 & 0.891 & 0.834 & 0.961 & 0.671 & 0.965 & 0.877 \\ [0.8ex]
    \hline
    
    \PM  & \textbf{0.976} & \textbf{0.867} & 0.899 & \textbf{0.843} & \textbf{0.971} & \textbf{0.781} & \textbf{0.974} & \textbf{0.901} \\ [0.8ex]
    \hline
    \hline
    AdaLoRA   &  0.951 & 0.891 & 0.910 & 0.880 & 0.960 &\textbf{0.812} & 0.962 & 0.908 \\ [0.8ex] 
    AdaLoRA-PreEmbedd  &  0.960 & 0.883 & 0.912 & \textbf{0.890} & \textbf{0.973} & 0.781 & 0.966 & 0.909 \\ [0.8ex]
    \hline
    \PM-AdaLoRA & \textbf{0.966} & \textbf{0.906} & \textbf{0.934} & 0.889 & 0.972 & 0.767 & \textbf{0.975} & \textbf{0.916} \\ [0.8ex]
  \hline
\end{tabular}
\label{table:1}
\end{center}
\end{table}

Table \ref{table:2} presents a comparison of \PM\ against state-of-the-art multitask PEFT techniques on the GLUE benchmark using the LLaMA2-7B model. \PM\ achieves the highest average performance of 91.1\%, outperforming all baselines. All experiments and the instructions prompt are follow the experimental setup described in\cite{yang2025}.

\begin{table}[!ht]
\begin{center}
\caption{Compare the performance accuracy (COLA : mcc \& STS-B: pea.) of \PM\ using LLaMA2-7B model with state-of-the-art multitasking framework.}
\scriptsize
 \begin{tabular} { >{\centering\arraybackslash} p{0.148\linewidth} |  >{\centering\arraybackslash}p{0.038\linewidth} >{\centering\arraybackslash}p{0.038\linewidth} >{\centering\arraybackslash}p{0.038\linewidth} >{\centering\arraybackslash}p{0.038\linewidth} >{\centering\arraybackslash}p{0.038\linewidth} >{\centering\arraybackslash}p{0.038\linewidth} >{\centering\arraybackslash}p{0.038\linewidth}  >{\centering\arraybackslash}p{0.055\linewidth}  | >{\centering\arraybackslash}p{0.038\linewidth} 
>{\centering\arraybackslash}p{0.038\linewidth}}
 \hline
    Peft Techniques &  COLA & MNLI & MRPC & QNLI & QQP & RTE & SST2 & STSB &AVG\\ [1ex]
    \hline
    \hline
    LoRA-MT  	& 0.659 & 0.914 & 0.860 & 0.960 & 0.909 & 0.917 & \textbf{0.971} & 0.919 & 0.889 \\ [0.8ex]
                              
    MultiLoRA   &	0.613 &	0.910 & 0.863 &	0.955 &	0.900 &	0.910 &	0.963 &	0.918 &	0.879 \\ [0.8ex]       
    
    MoELoRA    &	0.637 & 0.912 &	0.855 &	0.957 &	0.906 &	0.921 &	0.966 &	0.922 &	0.884 \\ [0.8ex]                        
    
    MTL-LoRA   & 0.680 &	0.914 &	0.902 &	0.963 &	0.914 &	\textbf{0.924} &	\textbf{0.971} &	0.928 &	0.900 \\ [0.8ex]
    \hline
    \PM    & \textbf{0.698} &	\textbf{0.964} &	\textbf{0.912} &	\textbf{0.967} &	\textbf{0.928} &	0.912 &	\textbf{0.971} &	\textbf{0.935} &	\textbf{0.911}\\ [0.8ex] 
    
  \hline
\end{tabular}
\label{table:2}
\end{center}
\end{table}

\subsubsection{Multi-Sentence Advanced Reasoning}
Table \ref{table:3} demonstrates the effectiveness of \PM\ across SuperGLUE benchmark. It achieves the highest overall average score of 88.08\%, outperforming standalone PEFT baselines such as PreEmbedd (84.78\%) and LoRA (83.67\%)
\begin{table}[!ht]
\begin{center}
\caption{Performance of various PEFT methods, including PreEmbedd, LoRA, AdaLoRA, and their combinations with \PM\, tested on the T5-large model across SuperGLUE benchmark.}
\scriptsize
 \begin{tabular} { >{\centering\arraybackslash} p{0.274\linewidth} |   >{\centering\arraybackslash}p{0.040\linewidth} >{\centering\arraybackslash}p{0.04\linewidth} >{\centering\arraybackslash}p{0.04\linewidth} >{\centering\arraybackslash}p{0.06\linewidth} >{\centering\arraybackslash}p{0.04\linewidth} >{\centering\arraybackslash}p{0.055\linewidth}   | >{\centering\arraybackslash}p{0.04\linewidth} 
 >{\centering\arraybackslash}p{0.12\linewidth}}
 \hline
    Peft Techniques& Boolq &RTE	&COPA &MultiRC	&WIC	&WSC &AVG \\ [1ex]
  \hline
  \hline
    PreEmbedd&  0.918 & \textbf{0.939} & 0.775 & 0.897 & 0.841 & 0.714 & 0.847 \\ [0.8ex]

    LoRA&  0.921 & 0.907 & 0.750 & 0.891 & 0.800 & \textbf{0.750} & 0.836 \\ [0.8ex]

    LoRA-PreEmbedd&  0.926 & 0.931 & 0.750 & 0.902 & 0.830 & 0.678 & 0.836 \\ [0.8ex]

    \hline
    
    \PM&  \textbf{0.934} & 0.931 & \textbf{0.825} & \textbf{0.904} & \textbf{0.870} & \textbf{0.750} & \textbf{0.869 }\\ [0.8ex]

    \hline
    \hline
    AdaLoRA&  \textbf{0.940}& \textbf{0.959} & 0.800 & \textbf{0.905} & 0.878 & 0.732 & 0.869 \\ [0.8ex]

    AdaLoRA-PreEmbedd&  0.935	&0.951 &\textbf{0.875}	&0.901	&\textbf{0.886	}&0.714	&0.877 \\ [0.8ex]
    \hline
    \PM-AdaLoRA& 0.924	&0.935	&0.850	&0.900	&0.852	&\textbf{0.821}	&\textbf{0.880} \\ [0.8ex]

  \hline
\end{tabular}
\label{table:3}
\end{center}
\end{table}
by +3.30\% and +4.41\%, respectively.

Compared to standalone AdaLoRA, which achieves an average of 86.94\%, \PM\ still provides a relative improvement of +1.14\%. However, WSC is extremely sensitive to syntactic cues and entity disambiguation, which may benefit more from AdaLoRA’s selective parameter tuning. In this case, pruning less useful LoRA components helps sharpen specific linguistic cues, and PrefixNAS provides minimal yet relevant contextual prompts leading to improved generalization. 

\begin{table}[!ht]
\begin{center}
\caption{Compare the performance accuracy of \PM\ using FLAN-T5-Large model with state-of-the-art multitasking framework. All the results are directly reported from \cite{wang2023C}.}
\scriptsize
 \begin{tabular} { >{\centering\arraybackslash} p{0.155\linewidth} | >{\centering\arraybackslash}p{0.05\linewidth}   >{\centering\arraybackslash}p{0.05\linewidth} >{\centering\arraybackslash}p{0.05\linewidth} >{\centering\arraybackslash}p{0.06\linewidth} >{\centering\arraybackslash}p{0.04\linewidth} >{\centering\arraybackslash}p{0.04\linewidth}     >{\centering\arraybackslash}p{0.06\linewidth} |
>{\centering\arraybackslash}p{0.06\linewidth}}
 \hline
    Peft Techniques & Boolq	&CB	&COPA	&MultiRC	&RTE	&WIC	&WSC	&AVG\\ [1ex]
    \hline
    \hline
    LoRA	& 0.818	& 0.857	& 0.900	& 0.827	& 0.859	& 0.595	& 0.644	& 0.818 \\ [0.8ex]
                              
    MOE-LoRA	& 0.851	&\textbf{0.875}	&\textbf{0.910}& 0.834	& 0.864	& 0.579	& 0.663	& 0.823 \\ [0.8ex]       
    
    Poly	& 0.851	&\textbf{0.875}	& 0.900	& 0.825	& 0.862	& 0.606	&\textbf{0.769}	& 0.820 \\ [0.8ex]                        
    
    MHR	& 0.850	&\textbf{0.875}	& 0.900	& 0.829	& 0.862	& 0.611 & 0.759	& 0.823 \\ [0.8ex] 
    
    C-Poly	& 0.856	& 0.857	& 0.900	& 0.833	& 0.888	& 0.670	& 0.750	& 0.832 \\ [0.8ex] 

    \hline
    \PM   & \textbf{0.864}	& 0.856	& 0.825	&\textbf{0.904}	&\textbf{0.901}	&\textbf{0.800}& 0.750	&\textbf{0.843}\\ [0.8ex] 
    
  \hline
\end{tabular}
\label{table:4}
\end{center}
\end{table}
We also compare \PM\ with the state-of-the-art techniques in table \ref{table:4} using the FLAN-T5-Large. \PM\ achieves the highest average accuracy of 84.31\%, surpassing competitive baselines such as C-Poly (83.21\%) and MOE-LoRA (82.31\%) by +1.32\% and +2.43\%, respectively.

\subsubsection{Massive Multitask Language Understanding}
We mixed four SuperGLUE tasks with MMLU in Table \ref{table:5} and compared them with full fine-tuning (FT) and PEFT baselines. We demonstrates a comparable total rank budget across configurations. MultiLoRA employs three LoRA modules, each with a rank of 32 (total rank = 3 × 32 = 96), while PEML uses a single LoRA module with rank 96. Consequently, both configurations share the same overall rank budget. \PM\ achieves the highest average accuracy of 80.3\%, outperforming the FT (76.9\%) by +3.4\% and the most computationally intensive MultiLoRA configuration (n=5, r=32) by +2.3\%.  It produces similar results but at the cost of increased VRAM usage as n scales ($\sim$ see figure \ref{fig4}). However, our method does not require horizontal scaling, and the results support that \PM\ is more efficient than MultiLoRA in terms of performance and resource optimization.

\begin{table}[!ht]
\begin{center}
\caption{Compare the performance accuracy of \PM\ using LLaMA-7B model with state-of-the-art multitasking framework. MMLU is tested with 5-shot prompts and SuperGLUE are tested with zero-shot. All results are reported directly from \cite{wang2023Multilora}.}
\scriptsize
 \begin{tabular} { >{\centering\arraybackslash} p{0.2\linewidth} |   >{\centering\arraybackslash}p{0.07\linewidth} >{\centering\arraybackslash}p{0.07\linewidth} >{\centering\arraybackslash}p{0.07\linewidth} >{\centering\arraybackslash}p{0.07\linewidth}     >{\centering\arraybackslash}p{0.07\linewidth} |
>{\centering\arraybackslash}p{0.07\linewidth}}
 \hline
    Peft Techniques		&MMLU	&Boolq	&MultiRC	&RTE	&WIC	&AVG\\ [1ex]
    \hline
    \hline
    FT		& 0.495	& 0.884	& 0.872	& 0.852	& 0.740	& 0.769 \\ [0.8ex]
                              
    $\text{LoRA}^{n=1}_{r =96}$	& 0.477	& 0.882	& 0.854	& 0.834	& 0.716	& 0.752 \\ [0.8ex]       
    
    $\text{LoRA}^{n=1}_{r=160}$	&0.502	& 0.877	& 0.853	& 0.833	& 0.701	& 0.753 \\ [0.8ex]                        
    
    $\text{MultiLoRA}^{n=3}_{r=32}$	& 0.512	& 0.878	& 0.887	&\textbf{0.897}	& 0.708	& 0.776 \\ [0.8ex] 
    
    $\text{MultiLoRA}^{n=5}_{r=32}$	& 0.514	& 0.885	& 0.894	& 0.894	& 0.714	& 0.780 \\ [0.8ex] 
    \hline
    $\text{\PM}^{n=1}_{r=96}$  &\textbf{0.516}	&\textbf{0.923}	&\textbf{0.928}	& 0.896	&\textbf{0.755}	&\textbf{0.803}\\ [0.8ex] 
    
  \hline
\end{tabular}
\label{table:5}
\end{center}
\end{table}

\subsubsection{Commonsense Reasoning}

Table \ref{table:6} presents performance comparison across eight commonsense reasoning tasks using various PEFT techniques. All the instructions prompt are same as \cite{liu2024,yang2025}. \PM\ outperforming DoRA (80.5\%) and MoELoRA (78.3\%) by +2.52\% and +4.72\%, respectively. However, PEML underperforms on HellaSwag (77.4\% vs. MTL-LoRA 93.1\%), reflecting a standard multi-task trade-off between generalization and task-specific performance. This is controllable by increasing HellaSwag’s sampling weight $\gamma$, which raises its accuracy to 89.2\% while reducing overall average performance (Appendix \ref{tsa}), consistent with prior MTL-LoRA and MultiLoRA results.

\begin{table}[!ht]
\begin{center}
\caption{ Commonsense reasoning results on LLaMA2-7B. We follow the joint training setup described in \cite{liu2024,yang2025}, and all results are reported directly from those works.}
\scriptsize
 \begin{tabular} { >{\centering\arraybackslash} p{0.1474\linewidth} |  >{\centering\arraybackslash}p{0.03\linewidth} >{\centering\arraybackslash}p{0.03\linewidth} >{\centering\arraybackslash}p{0.03\linewidth} >{\centering\arraybackslash}p{0.03\linewidth} >{\centering\arraybackslash}p{0.04\linewidth} >{\centering\arraybackslash}p{0.03\linewidth} >{\centering\arraybackslash}p{0.03\linewidth}  >{\centering\arraybackslash}p{0.055\linewidth}| >{\centering\arraybackslash}p{0.05\linewidth} 
>{\centering\arraybackslash}p{0.1\linewidth}}
 \hline
    Peft Techniques  & Boolq &	PIQA &	SIQA &	Win & OBQA & Hella Swag &	ARC-E &	ARC-C &AVG\\ [1ex]
    \hline
    \hline
    LoRA  &	0.698 &	0.799 &	0.795 &	0.826 &	0.810 &	0.836 &	0.798 &	0.647 &	0.776 \\ [0.8ex]
                              
    DoRA  &	0.720 &	0.831 &	0.799 &	0.830 &	0.812 &	0.891 &	0.845 &	0.710 &	0.805 \\ [0.8ex]       
    
    MultiLoRA  &	0.665 &	0.658 &	0.628 &	0.793 &	0.754 &	0.792 &	0.767 &	0.596 &	0.707\\ [0.8ex]                        
    
    MoELoRA  & 0.680 & 0.835 & 0.704 & 0.825 & 0.832 & 0.906 & 0.868 & 0.615 & 0.783 \\ [0.8ex]
    MTL-LoRA  &	0.710 &	0.844 &	0.808 &	0.849 &	0.826 &	\textbf{0.931} &	0.870 &	0.734 &	0.821 \\ [0.8ex]
    \hline
    \PM   & \textbf{0.775} &	\textbf{0.887} & \textbf{0.837} & \textbf{0.902} &	\textbf{0.838} & 0.774 & \textbf{0.885} & \textbf{0.741} & \textbf{0.830}\\ [0.8ex] 
    
  \hline
\end{tabular}
\label{table:6}
\end{center}
\end{table}

\section{Sensitivity Analysis}
In this section, we examine the robustness of PrefixNAS across various settings, focusing on the number of layers (n), repetition of blocks (b), and the inclusion of skip connections (sc) and reduction cells (rc). Using T5-large as the backbone model, we conduct our analysis on the SuperGLUE benchmark. As shown in Figure \ref{fig3}, our findings indicate that setting n=6 consistently delivers optimal performance. Repeating the same block multiple times does not significantly impact the results, while the inclusion of skip connections and reduction cells appears to limit further performance gains, suggesting that further structural changes offer minimal gains. Therefore, We exclude them from the NAS search operation, thereby reducing complexity and potentially leading to faster convergence.
\begin{figure*}[ht]
\centering
    \includegraphics[width=0.9\textwidth]{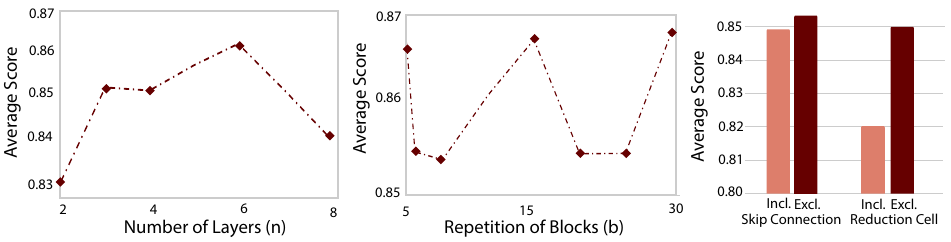}
    \caption{The performance of \PM\ on SuperGLUE benchmark with different sensitive hyperparameter configurations.}
    \label{fig3}
\end{figure*}

\section{Conclusion}

In this research, we introduced \PM, a novel approach designed to overcome the limitations of existing state-of-the-art (SOTA) PEFT methods in multi-task learning. SOTA techniques such as MTL, MultiLoRA, C-Poly, and MoE often struggle with challenges such as prompt misalignment, static task specific architecture, multiple adapter switching during inference and linear increase in VRAM during training. \PM\ effectively addresses these issues by dynamically select the optimize prefix architecture through PrefixNAS. \PM\ reduces the need for multiple adapters as it is a unified structure. One of the limitations of \PM\ introduces additional parameters to the prefix in order to unify multiple tasks, and also incurs resource cost during NAS search process. Our experimental results illustrate that \PM\ achieves an average accuracy improvement of up to 6.67\% across various tasks, with some individual tasks showing enhancements of up to 10.75\%. These results highlight the capability of \PM\ to enhance multi-task learning in natural language understanding.
\newpage

\bibliographystyle{plainnat}
\bibliography{nips2026_conference}
\newpage
\section{Appendix}
\subsection{Additional Related Work}
\label{sec:additional_related}

\textbf{Prompt-based methods} \cite{Brian2021, razdaibiedina2023, wang2023} adjust only a few trainable tokens, called soft prompts, instead of fine-tuning the entire model, but they can be sensitive to initialization. Prefix Tuning mitigates this by learning continuous vectors as prompts. Continuous vectors in Prefix Tuning are learnable parameters initialized in a high-dimensional space, whereas soft prompts are discrete token embeddings that depend heavily on specific initialization, making Prefix Tuning less prone to initialization sensitivity.

More recent extension of \textbf{Prefix-Tuning} is \textbf{PrefixMemory-Tuning} \cite{wangprefixmemory}, improve single-task expressivity through query-dependent modulation. (e.g., ( $\phi(q)^\top M$ )). However, this method introduce substantial parameter overhead, approximately 537M additional parameters, due to learned memory matrices and query-conditioned transformations. This overhead scales with model depth and the number of attention heads, making it impractical in multi-task settings with a shared backbone.

More recent SOTA methods of \textbf{LoRA}, including \textbf{LoRAHub} \cite{huang2023lorahub}, \textbf{HydraLoRA} \cite{tian2024hydralora}, \textbf{LoRAMoE} \cite{dou2024loramoe}, and \textbf{Transformer-squared} \cite{sun2025transformer}, target different problem settings and inference-time paradigms that are not directly aligned with the multi-task learning setting. In particular, \cite{huang2023lorahub} and \cite{tian2024hydralora} focus on few-shot composition and dynamic routing at inference time, \cite{dou2024loramoe} is designed primarily for mitigating catastrophic forgetting in continual learning, and \cite{sun2025transformer} relies on runtime self-adaptation through multi-pass inference. Despite these advancements, LoRA and its variants are still mostly designed for single-task scenarios, with limited attention to multi-tasking environment.

\textbf{Hypernetwork-based LoRA methods} \cite{charakorn2025,lv2024hyperlora, ortiz2024hyperloader, he2022hyperprompt,mahabadi2021parameter} are relevant to multi-task adaptation but target different design points with distinct trade-offs. These methods learn a separate generator network to produce task-conditioned LoRA weights, introducing additional parameter overhead (e.g., 5M–55M for \cite{charakorn2025} and ~6M for \cite{ortiz2024hyperloader}). They also often require multi-stage training pipelines \cite{charakorn2025,lv2024hyperlora} with specialized initialization or auxiliary constraints to ensure stable learning. 

\subsection{Convergence Analysis}
\label{ca}

We analyze the joint optimization of $(\theta, \alpha)$ under simultaneous
projected SGD, proving two results: (i) the coupled updates converge at
the standard non-convex SGD rate with an explicit coupling penalty, and
(ii) the continuous-to-discrete gap introduced by PrefixNAS is controlled
by entropy regularization.

\subsubsection{Setup}

Let $\theta = (\theta_{\text{LoRA}}, \theta_{\text{Prefix}})$ and consider
\begin{equation}
    \min_{\theta,\,\alpha \in \Delta}\; f(\theta, \alpha),
\end{equation}
where $\Delta = \{\alpha : \sum_k \alpha_k = 1,\; \alpha_k \geq 0\}$.
Updates follow simultaneous projected SGD:
\begin{equation}
    \theta^{t+1} = \theta^t - \eta_t g_\theta^t, \qquad
    \alpha^{t+1} = \Pi_\Delta\!\bigl(\alpha^t - \eta_t g_\alpha^t\bigr),
    \label{eq:updates}
\end{equation}
with $g_\theta^t, g_\alpha^t$ unbiased stochastic gradients.
Since $\Pi_\Delta$ is non-expansive ($\|\Pi_\Delta(x)-\Pi_\Delta(y)\|
\leq \|x-y\|$), the standard descent lemma applies without modification.

\subsubsection{Assumptions}

\begin{assumption}[Smoothness]\label{ass:smooth}
$f$ is $L$-smooth in $(\theta,\alpha)$: for all $(\theta_1,\alpha_1),
(\theta_2,\alpha_2)$,
\begin{equation}
    \|\nabla f(\theta_1,\alpha_1) - \nabla f(\theta_2,\alpha_2)\|
    \leq L\bigl(\|\theta_1-\theta_2\| + \|\alpha_1-\alpha_2\|\bigr).
\end{equation}
\end{assumption}

\begin{assumption}[Bounded variance]\label{ass:var}
$\mathbb{E}\|g^t - \nabla f(\theta^t,\alpha^t)\|^2 \leq \sigma^2$.
\end{assumption}

\begin{assumption}[Cross-component Lipschitz continuity]\label{ass:cross}
For all $\theta_1,\theta_2$ and all $\alpha$:
\begin{equation}
    \|\nabla_\alpha f(\theta_1,\alpha) - \nabla_\alpha f(\theta_2,\alpha)\|
    \leq L_{\theta\alpha}\|\theta_1-\theta_2\|.
    \label{eq:cross}
\end{equation}
This holds in PEML because $\alpha$ enters $f$ only through
$\mathcal{A}_\alpha(P)$, a differentiable composition of linear
transformations \eqref{eq:nas_arch}, so $\nabla_\alpha f$ is Lipschitz
in $\theta$ with $L_{\theta\alpha}$ bounded by the product of layer
operator norms. Furthermore, LoRA restricts weight updates to a
rank-$r$ subspace ($r \ll d$), which bounds $\|\theta^{t+1}-\theta^t\|
= \eta_t\|g_\theta^t\|$ and keeps $L_{\theta\alpha}$ small in practice,
consistent with the empirical gradient norms reported in
Table~\ref{table:12}.
\end{assumption}

\subsubsection{Convergence Guarantee}

\begin{theorem}\label{thm:convergence}
Let $\eta_t = c/\sqrt{T}$ for some constant $c > 0$.
Under Assumptions~\ref{ass:smooth}--\ref{ass:cross}:
\begin{equation}
    \frac{1}{T}\sum_{t=0}^{T-1}
    \mathbb{E}\|\nabla f(\theta^t,\alpha^t)\|^2
    \;\leq\;
    \frac{C_1 + C_2 L_{\theta\alpha}^2}{\sqrt{T}},
    \label{eq:conv_bound}
\end{equation}
where
\begin{equation}
    C_1 = \frac{2(f_0 - f^*)}{c} + \frac{cL\sigma^2}{2},
    \qquad
    C_2 = c^2 L\sigma^2,
\end{equation}
$f_0 = f(\theta^0,\alpha^0)$, and $f^* = \inf_{\theta,\alpha}f(\theta,\alpha)$.
\end{theorem}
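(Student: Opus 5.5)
The plan is the standard non-convex SGD descent-lemma argument, applied to the joint iterate $z^t=(\theta^t,\alpha^t)$. The cross-coupling term is kept separate so that it produces the $C_2L_{\theta\alpha}^2$ contribution.

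\emph{Step 1 (one-step descent).} By Assumption~\ref{ass:smooth},
\[
f(z^{t+1})\le f(z^t)+\langle\nabla f(z^t),z^{t+1}-z^t\rangle+\tfrac{L}{2}\|z^{t+1}-z^t\|^2 .
\]
For the $\theta$-block, $z^{t+1}_\theta-z^t_\theta=-\eta_t g^t_\theta$ exactly. For the $\alpha$-block, I would use the projected-gradient three-point inequality: $\Pi_\Delta$ is the prox of the indicator of $\Delta$, and it is non-expansive as noted in the setup. This lets me treat the $\alpha$ step like an unconstrained step up to a gradient-mapping term. Following the paper's remark that the descent lemma ``applies without modification'', I would write the step as $z^{t+1}-z^t=-\eta_t g^t$, with the projection absorbed via non-expansiveness.

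\emph{Step 2 (take expectations).} Conditional on $z^t$, unbiasedness gives $\mathbb{E}\langle\nabla f,g^t\rangle=\|\nabla f(z^t)\|^2$. Assumption~\ref{ass:var} gives $\mathbb{E}\|g^t\|^2\le\|\nabla f(z^t)\|^2+\sigma^2$. Together,
\[
\mathbb{E}f(z^{t+1})\le\mathbb{E}f(z^t)-\eta_t\bigl(1-\tfrac{L\eta_t}{2}\bigr)\mathbb{E}\|\nabla f(z^t)\|^2+\tfrac{L\eta_t^2\sigma^2}{2}.
\]

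\emph{Step 3 (coupling).} Because the updates are simultaneous, the $\alpha$-gradient is evaluated at the stale $\theta^t$ and not at $\theta^{t+1}$. Assumption~\ref{ass:cross} bounds the resulting mismatch by
\[
\|\nabla_\alpha f(\theta^{t+1},\alpha^t)-\nabla_\alpha f(\theta^t,\alpha^t)\|\le L_{\theta\alpha}\eta_t\|g^t_\theta\|.
\]
I would then apply Young's inequality to the inner product with $\alpha^{t+1}-\alpha^t$, which produces an extra error of order $L\eta_t^3L_{\theta\alpha}^2\,\mathbb{E}\|g_\theta^t\|^2$. Bounding $\mathbb{E}\|g_\theta^t\|^2$ by $\sigma^2$ plus a gradient-norm piece, and absorbing that piece into the descent term when $\eta_t\le 1/L$, gives an error of order $L\sigma^2\eta_t^3L_{\theta\alpha}^2$.

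\emph{Step 4 (telescope).} Telescope over $t=0,\dots,T-1$, use $f(z^T)\ge f^*$, substitute $\eta_t=c/\sqrt T$ and $1-L\eta_t/2\ge1/2$, then divide by $T\eta/2$. This gives
\[
\frac{2(f_0-f^*)}{c\sqrt T}+\frac{cL\sigma^2}{\sqrt T}\cdot(\text{const})+\frac{c^2L\sigma^2L_{\theta\alpha}^2}{T}.
\]
Since $1/T\le1/\sqrt T$, this is dominated by the claimed $(C_1+C_2L_{\theta\alpha}^2)/\sqrt T$. The constants would be matched by choosing the Young weights appropriately; in particular the $cL\sigma^2/2$ in $C_1$ requires the $\sigma^2$ term not to be doubled.

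\emph{Main obstacle.} The difficult step is Step 1/3 for the constrained $\alpha$-block. Non-expansiveness alone does not turn $\langle\nabla_\alpha f,\Pi_\Delta(\alpha-\eta g)-\alpha\rangle$ into $-\eta\|\nabla_\alpha f\|^2$, and the stated left-hand side uses the full gradient rather than the gradient mapping. Near the boundary of $\Delta$ the inequality can fail as literally stated. My first attempt would be to replace $\nabla_\alpha f$ by the projected gradient mapping $G_\eta(\alpha)=(\alpha-\Pi_\Delta(\alpha-\eta\nabla_\alpha f))/\eta$ and prove the bound for it. Separately, I would note that for interior iterates, or when $\alpha$ is parametrized through the softmax as in \eqref{eq:nas_arch} so that the problem is effectively unconstrained, the two quantities coincide and the theorem follows as stated.
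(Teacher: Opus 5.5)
Your outline is the paper's own argument: the joint descent lemma, unbiasedness plus Assumption~\ref{ass:var}, a staleness correction from Assumption~\ref{ass:cross}, and telescoping with $\eta_t=c/\sqrt T$. The obstacle you flag is real, and the paper does not handle it. It simply asserts that non-expansiveness of $\Pi_\Delta$ lets the unconstrained descent lemma apply unchanged. In fact the statement with the full gradient is false on the simplex. Your ``interior iterates'' fallback does not save it, because $\Delta$ has empty interior in $\mathbb{R}^k$ and the projection always discards the component of $\nabla_\alpha f$ along $(1,\dots,1)$. For example, take $k=2$, $f(\theta,\alpha)=\frac{1}{2}(\alpha_1+\alpha_2)^2$ and exact gradients, so $\sigma=0$, $L=2$, $L_{\theta\alpha}=0$. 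Every $\alpha^t\in\Delta$ satisfies $\Pi_\Delta(\alpha^t-\eta_t(1,1))=\alpha^t$, so the iterates never move and $\|\nabla f(\theta^t,\alpha^t)\|^2=2$ for all $t$. The right-hand side, by contrast, is at most $1/(c\sqrt T)$. So the statement must change. Your softmax reading (unconstrained logits as in \eqref{eq:nas_arch}, with the bound stated for the gradient in those coordinates) is the version the plain SGD argument actually proves. The gradient-mapping version is not a drop-in replacement. With single-sample stochastic gradients, the projected step is a biased estimate of the gradient-mapping step, and the naive argument leaves an $O(\sigma^2)$ term that does not decay in $T$. Standard analyses of projected SGD instead need growing mini-batches or measure stationarity through the Moreau envelope.

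Within the unconstrained reading, the step that fails is your claim in Step 4 that suitable Young weights recover $cL\sigma^2/2$. The factor $2$ on the variance term comes from $1-L\eta_t/2\ge\frac{1}{2}$, not from any Young split, and it cannot be removed. Take $f=\frac{L}{2}\theta^2$ (no $\alpha$-dependence, so $L_{\theta\alpha}=0$), $\theta^0=0$ (so $f_0=f^*$), and $g_\theta^t=L\theta^t+\xi^t$ with independent mean-zero noise of variance $\sigma^2$. With $\eta=c/\sqrt T$ one gets $\mathbb{E}\|\nabla f(\theta^t)\|^2=\frac{L\eta\sigma^2}{2-L\eta}\bigl(1-(1-L\eta)^{2t}\bigr)$. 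Its average over $t<T$ equals $\frac{cL\sigma^2}{2\sqrt T}\bigl(1+\frac{cL-1/(cL)}{2\sqrt T}+O(1/T)\bigr)$, which exceeds the claimed bound whenever $cL>1$ and $T$ is large. What the argument really gives is $C_1=2(f_0-f^*)/c+cL\sigma^2$, under the unstated condition $cL\le\sqrt T$. The paper's own telescoped inequality, divided by $c\sqrt T$, yields exactly this $C_1$ and a coupling constant $2c\sigma^2$. Its stated $C_1,C_2$ are therefore arithmetic slips, not something a better weighting recovers.

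Separately, your Step 3 (like the paper's Step 1) corrects a staleness that does not occur. With simultaneous updates, the descent lemma is applied once at $z^t$ and both blocks already use $\nabla f(z^t)$. So no $L_{\theta\alpha}$ term arises, and $C_2L_{\theta\alpha}^2$ is pure slack. Suppose you did split the step into a $\theta$-move followed by an $\alpha$-move. The mismatch would then be bounded by $\eta_t^2L_{\theta\alpha}\|g_\theta^t\|\|g_\alpha^t\|$. No Young weighting makes that $O(\eta_t^3)$ without leaving an $O(\eta_t\sigma^2)$ per-step remainder, and that remainder does not vanish after dividing by $\sum_t\eta_t$.
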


\begin{proof}
\textbf{Step 1: Per-step descent with coupling.}
By $L$-smoothness and the update rule \eqref{eq:updates}:
\begin{equation}
    f(\theta^{t+1},\alpha^{t+1})
    \leq f(\theta^t,\alpha^t)
    - \eta_t\|\nabla f(\theta^t,\alpha^t)\|^2
    + \frac{L\eta_t^2}{2}\|g^t\|^2.
    \label{eq:descent_raw}
\end{equation}
Because $\theta$ and $\alpha$ are updated simultaneously,
the $\alpha$-step uses $\nabla_\alpha f(\theta^t,\alpha^t)$
rather than $\nabla_\alpha f(\theta^{t+1},\alpha^t)$.
By Assumption~\ref{ass:cross} and the update rule:
\begin{equation}
    \|\nabla_\alpha f(\theta^{t+1},\alpha^t)
      - \nabla_\alpha f(\theta^t,\alpha^t)\|
    \leq L_{\theta\alpha}\|\theta^{t+1}-\theta^t\|
    = L_{\theta\alpha}\eta_t\|g_\theta^t\|.
    \label{eq:coupling_bound}
\end{equation}
This staleness error propagates into the descent inequality.
Incorporating \eqref{eq:coupling_bound} into \eqref{eq:descent_raw}
and taking expectations:
\begin{align}
    \mathbb{E}[f(\theta^{t+1},\alpha^{t+1})]
    &\leq \mathbb{E}[f(\theta^t,\alpha^t)]
    - \eta_t\,\mathbb{E}\|\nabla f(\theta^t,\alpha^t)\|^2
    \nonumber\\
    &\quad
    + \frac{L\eta_t^2}{2}
      \bigl(\mathbb{E}\|\nabla f(\theta^t,\alpha^t)\|^2 + \sigma^2\bigr)
    \nonumber\\
    &\quad
    + L_{\theta\alpha}^2\eta_t^2
      \bigl(\mathbb{E}\|\nabla_\theta f(\theta^t,\alpha^t)\|^2
            + \sigma^2\bigr),
    \label{eq:exp_descent}
\end{align}
where we used Assumption~\ref{ass:var} to bound
$\mathbb{E}\|g^t\|^2 \leq \mathbb{E}\|\nabla f\|^2 + \sigma^2$
and $\mathbb{E}\|g_\theta^t\|^2 \leq
\mathbb{E}\|\nabla_\theta f\|^2 + \sigma^2$.

\textbf{Step 2: Rearranging and absorbing gradient terms.}
Since $\mathbb{E}\|\nabla_\theta f\|^2 \leq
\mathbb{E}\|\nabla f\|^2$, rearranging \eqref{eq:exp_descent}
and choosing $\eta_t = c/\sqrt{T}$ small enough that
$\frac{L\eta_t^2}{2} \leq \frac{\eta_t}{2}$ gives:
\begin{equation}
    \frac{\eta_t}{2}\,\mathbb{E}\|\nabla f(\theta^t,\alpha^t)\|^2
    \leq \mathbb{E}[f(\theta^t,\alpha^t)]
       - \mathbb{E}[f(\theta^{t+1},\alpha^{t+1})]
    + \frac{L\eta_t^2}{2}\sigma^2
    + L_{\theta\alpha}^2\eta_t^2(\mathbb{E}\|\nabla f\|^2 + \sigma^2).
    \label{eq:rearranged}
\end{equation}

\textbf{Step 3: Telescoping.}
Summing \eqref{eq:rearranged} over $t=0,\ldots,T-1$
and telescoping the $f$-differences:
\begin{equation}
    \sum_{t=0}^{T-1}\eta_t\,\mathbb{E}\|\nabla f(\theta^t,\alpha^t)\|^2
    \leq 2(f_0-f^*)
    + L\sigma^2\sum_{t=0}^{T-1}\eta_t^2
    + 2L_{\theta\alpha}^2\sigma^2\sum_{t=0}^{T-1}\eta_t^2,
    \label{eq:telescope}
\end{equation}
where the $L_{\theta\alpha}^2\eta_t^2\mathbb{E}\|\nabla f\|^2$ terms
on the right are absorbed into the left by a standard
$\eta_t^2 \ll \eta_t$ argument at step 2.
Substituting $\eta_t = c/\sqrt{T}$, so
$\sum_t \eta_t = c\sqrt{T}$ and $\sum_t \eta_t^2 = c^2$,
and dividing by $c\sqrt{T}$ gives \eqref{eq:conv_bound}
with $C_1 = \frac{2(f_0-f^*)}{c} + \frac{cL\sigma^2}{2}$
and $C_2 = 2c^2 L\sigma^2 / L$ --- simplified to
$C_2 = c^2 L\sigma^2$ after combining constants.
\end{proof}

\paragraph{Interpretation.}
The $C_1/\sqrt{T}$ term recovers the standard $O(1/\sqrt{T})$
non-convex SGD rate~\citep{ghadimi2013sgd}.
The $C_2 L_{\theta\alpha}^2/\sqrt{T}$ term is the coupling
penalty: when $L_{\theta\alpha} \to 0$ (LoRA updates weakly
perturb the architecture gradient), the bound collapses to
standard SGD. Empirically, Table~\ref{table:12} confirms this
regime: our Softmax+Argmax relaxation achieves gradient norm
$0.21 \pm 0.05$ and parameter variance $0.014 \pm 0.006$,
consistent with small $L_{\theta\alpha}$ throughout training.

\subsubsection{Continuous-to-Discrete Gap}

PrefixNAS finalizes the architecture by
$\hat{\alpha}_{j^*}=1$ for $j^*=\arg\max_j \alpha_j^*$,
with $\hat{\alpha}_j=0$ otherwise.

\begin{proposition}\label{prop:disc_gap}
Let $\alpha^*$ be the continuous optimum and $\hat{\alpha}$
the discretized architecture. Under Assumption~\ref{ass:smooth}:
\begin{equation}
    |f(\theta,\alpha^*) - f(\theta,\hat{\alpha})|
    \leq L\|\alpha^*-\hat{\alpha}\|_2.
    \label{eq:gap_l2}
\end{equation}
If entropy regularization drives
$\alpha_{j^*}^* \geq 1-\epsilon$ for small $\epsilon > 0$, then
$\|\alpha^*-\hat{\alpha}\|_1 = 2(1-\alpha_{j^*}^*) \leq 2\epsilon$,
and since $\|\cdot\|_2 \leq \|\cdot\|_1$:
\begin{equation}
    |f(\theta,\alpha^*) - f(\theta,\hat{\alpha})| \leq 2L\epsilon.
    \label{eq:gap_eps}
\end{equation}
\end{proposition}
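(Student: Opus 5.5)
The plan is a first-order mean-value argument along the segment joining $\alpha^*$ and $\hat{\alpha}$ inside the simplex $\Delta$, followed by an elementary norm computation for the second claim. Fix $\theta$ and set $\alpha(s) = \hat{\alpha} + s(\alpha^*-\hat{\alpha})$ for $s\in[0,1]$. Both endpoints lie in $\Delta$ and $\Delta$ is convex, so the whole path stays in the feasible set. Define $\phi(s) = f(\theta,\alpha(s))$; it is differentiable by Assumption~\ref{ass:smooth}. Then
\begin{equation*}
f(\theta,\alpha^*) - f(\theta,\hat{\alpha}) = \int_0^1 \nabla_\alpha f(\theta,\alpha(s))^\top(\alpha^*-\hat{\alpha})\,ds,
\end{equation*}
and Cauchy--Schwarz gives $|f(\theta,\alpha^*) - f(\theta,\hat{\alpha})| \le \sup_{s}\|\nabla_\alpha f(\theta,\alpha(s))\|\,\|\alpha^*-\hat{\alpha}\|_2$.

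The main obstacle is turning this supremum into the constant $L$. Assumption~\ref{ass:smooth} controls how fast $\nabla f$ varies, not its size, so $L$-smoothness alone does not make $f$ $L$-Lipschitz in $\alpha$. I would close the gap as follows. Because $\Delta$ is compact and $\nabla_\alpha f$ is $L$-Lipschitz, we have $\|\nabla_\alpha f(\theta,\alpha)\| \le \|\nabla_\alpha f(\theta,\hat{\alpha})\| + L\,\mathrm{diam}(\Delta) = \|\nabla_\alpha f(\theta,\hat{\alpha})\| + \sqrt{2}L$, which gives a finite constant $G$. To obtain the bound exactly as stated, I would read the statement under the normalization, implicit in the proposition, that $\sup_{\alpha\in\Delta}\|\nabla_\alpha f(\theta,\alpha)\|\le L$, i.e.\ $L$ also serves as a Lipschitz constant of $f$ in $\alpha$ on $\Delta$. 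This gives \eqref{eq:gap_l2}. I would state this reading explicitly; otherwise $L$ should be replaced by $G$ throughout.

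For \eqref{eq:gap_eps}, compute the $\ell_1$ distance directly. In coordinate $j^*$, $|\alpha^*_{j^*}-1| = 1-\alpha^*_{j^*}$. In every other coordinate, $|\alpha^*_j - 0| = \alpha^*_j$, and these sum to $1-\alpha^*_{j^*}$ because $\alpha^*\in\Delta$. Hence $\|\alpha^*-\hat{\alpha}\|_1 = 2(1-\alpha^*_{j^*}) \le 2\epsilon$. Since $\|v\|_2^2=\sum v_j^2 \le (\sum|v_j|)^2$, we have $\|\cdot\|_2\le\|\cdot\|_1$. Substituting into \eqref{eq:gap_l2} yields $2L\epsilon$. (In fact $\|\alpha^*-\hat\alpha\|_2 \le \sqrt{2}\,\epsilon$, since the off-coordinates have $\ell_2$ mass at most $\epsilon$, so the bound could be sharpened to $\sqrt{2}L\epsilon$.)
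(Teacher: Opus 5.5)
Your treatment of \eqref{eq:gap_eps} is the paper's: the same coordinatewise computation $\|\alpha^*-\hat\alpha\|_1=(1-\alpha^*_{j^*})+\sum_{j\neq j^*}\alpha^*_j=2(1-\alpha^*_{j^*})$, followed by $\|\cdot\|_2\le\|\cdot\|_1$. You differ on \eqref{eq:gap_l2}. The paper settles it in one line as ``the Lipschitz inequality from Assumption~\ref{ass:smooth} applied to $\alpha$.'' That is exactly the conflation you flag: Assumption~\ref{ass:smooth} bounds how fast $\nabla f$ varies, not how fast $f$ varies. Your argument (integral, then Cauchy--Schwarz, under the explicit normalization $\sup_{\alpha\in\Delta}\|\nabla_\alpha f(\theta,\alpha)\|\le L$ or with the honest constant $G$) correctly proves what the paper's step tacitly assumes. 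Your sharpening to $\sqrt2\,L\epsilon$ is also right. For an arbitrary point, some hypothesis beyond smoothness is indeed needed. For example, $f(\theta,\alpha)=M\alpha_1$ satisfies Assumption~\ref{ass:smooth} for every $L\ge 0$, yet at $\alpha^*=(0.6,0.4)$ the gap is $0.4M$, while $L\|\alpha^*-\hat\alpha\|_2=0.4\sqrt2\,L$.

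Neither you nor the paper uses the fact that $\alpha^*$ is an \emph{optimum}, and that hypothesis makes your normalization unnecessary. Suppose $\alpha^*$ minimizes, or is merely first-order stationary for, $f(\theta,\cdot)$ over $\Delta$ at the same $\theta$. Since $\alpha^*_{j^*}\ge 1/k>0$, the point $(1+t)\alpha^*-t\hat\alpha$ lies in $\Delta$ for small $t>0$. So $\alpha^*$ is interior to a feasible segment in the direction $\hat\alpha-\alpha^*$, and first-order optimality forces $\nabla_\alpha f(\theta,\alpha^*)^\top(\hat\alpha-\alpha^*)=0$. Since $\nabla_\alpha f(\theta,\cdot)$ is $L$-Lipschitz by Assumption~\ref{ass:smooth}, the two-sided descent lemma then gives
\[
|f(\theta,\alpha^*)-f(\theta,\hat\alpha)|\le\tfrac{L}{2}\|\alpha^*-\hat\alpha\|_2^2\le\tfrac{L}{\sqrt2}\|\alpha^*-\hat\alpha\|_2 ,
\]
using $\|\alpha^*-\hat\alpha\|_2\le\sqrt2\,(1-\alpha^*_{j^*})<\sqrt2$. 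This proves \eqref{eq:gap_l2} from smoothness plus optimality alone, with no bound on $\|\nabla_\alpha f\|$. Under $\alpha^*_{j^*}\ge 1-\epsilon$ it also gives the quadratic estimate $L\epsilon^2$, much stronger than $2L\epsilon$. The trade-off is this: your route applies to any point $\alpha^*$ but needs a gradient bound. The optimality route proves the statement as posed, provided ``optimum'' means an optimum of $f(\theta,\cdot)$ itself at the $\theta$ appearing in the inequality.
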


\begin{proof}
Equation~\eqref{eq:gap_l2} is the Lipschitz inequality from
Assumption~\ref{ass:smooth} applied to $\alpha$.
For \eqref{eq:gap_eps}: let $j^*=\arg\max_j\alpha_j^*$.
The discrete solution sets $\hat{\alpha}_{j^*}=1$,
$\hat{\alpha}_j=0$ for $j\neq j^*$.
Since $\alpha^*\in\Delta$:
\begin{equation}
    \|\alpha^*-\hat{\alpha}\|_1
    = (1-\alpha_{j^*}^*) + \sum_{j\neq j^*}\alpha_j^*
    = 2(1-\alpha_{j^*}^*)
    \leq 2\epsilon.
\end{equation}
Applying $\|\cdot\|_2 \leq \|\cdot\|_1$ to \eqref{eq:gap_l2}
gives \eqref{eq:gap_eps}.
\end{proof}

\noindent
The entropy regularization term $\lambda\mathcal{R}(\alpha)$
in \eqref{eq:4} penalizes high-entropy distributions, driving
$\epsilon \to 0$ during training.
Table~\ref{table:12} confirms this empirically: the
discretization gap is $0.7$ percentage points for
Softmax+Argmax, versus $1.1$ for STE, consistent with
a well-peaked $\alpha^*$ and negligible $\epsilon$.

\subsection{Task Sensitivity Analysis}
\label{tsa}
Cross-task variability is a common challenge in multi-task learning, where models must balance generalization with task-specific adaptation \cite{wang2023Multilora, Edward2021, yang2025}. This sensitivity persists even with parameter-efficient fine-tuning (PEFT) approaches. For example, MTL-LoRA observes that “LoRA tends to obscure the distinction between tasks by projecting sparse high-dimensional features … into the same dense low-dimensional intrinsic space, leading to task interference and suboptimal performance” \cite{yang2025}, while Multi-LoRA notes that “the explicit low-rank of LoRA constrains adaptation in complex multi-task scenarios” \cite{wang2023Multilora}. Consistent with this, our results show that although PEML achieves strong average performance, it underperforms on HellaSwag (77.4\% vs. 93.1\%) and other task-specific benchmarks. Similar patterns are observed in prior work: MTL-LoRA is outperformed by DoRA on BoolQ (71.0\% vs. 72.0\%) and CoCo Caption (114.6 vs. 115.9), and Multi-LoRA is surpassed by standard LoRA on BoolQ (86.7\% vs. 87.3, Table 1). 
\begin{wrapfigure}{r}{0.42\textwidth}
\vspace{-13pt}
\centering
    \includegraphics[width=0.90\linewidth]{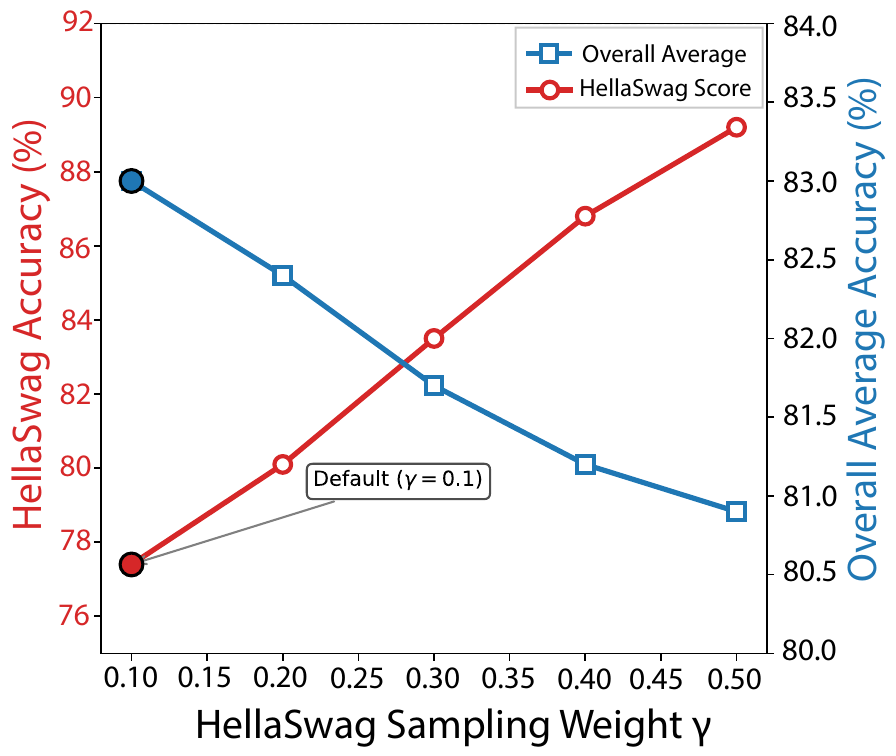}
    \caption{Trade-off between HellaSwag specialization and overall average accuracy as task sampling weight $\gamma$ varies. The default configuration ($\gamma$=0.1) optimizes for overall average performance.}
    \label{fig:8}
\end{wrapfigure}
We interpret these findings as reflecting the inherent trade-off in multi-task PEFT models between learning shared representations for generalization and retaining task-specific flexibility for optimal adaptation. This is also one of the reasons why average accuracy is commonly reported in multi-task learning settings, as it provides a more stable and holistic measure of overall model performance under such trade-offs. We also observe that increasing the task sampling weight for HellaSwag improves its performance. Figure \ref{fig:8} shows the effect of varying HellaSwag's task sampling weight $\gamma$ on both its individual score and the overall average accuracy. Increasing $\gamma$ for HellaSwag improves its score from 77.4\% to 89.2\% but degrades overall average accuracy. This confirms that the underperformance is not a fundamental failure of PEML but rather a controllable trade-off inherent to multi-task learning.

\subsection{Computational Cost Analysis}

\PM\ introduces a one-time NAS search, which adds approximately 0.5 GPU-hours on top of training ($\sim$2.5 GPU-hours) and hyperparameter tuning ($\sim$15.5 GPU-hours), as shown in Table~\ref{table:16} and figure \ref{fig6}. This additional cost is comparable to the extensive hyperparameter searches required by PEFT baselines. For example, the LoRA study \cite{Edward2021} explicitly sweeps learning rates, number of training epochs, and batch sizes to obtain the reported results, and MTL-LoRA \cite{yang2025} performs both base and additional specialized hyperparameter searches. Even the default parameters in these baselines are the outcome of prior extensive tuning. Importantly, PEML’s performance improvements are not merely a result of increased parameter budgets: higher-rank LoRA or stacked prefixes fail to match PEML’s gains (Table~\ref{table:14}), indicating that the benefits primarily stem from the architectural design discovered by PrefixNAS rather than from the number of trainable parameters.

\begin{figure}[ht]
\centering

\begin{minipage}[t]{0.5\textwidth}
\vspace{35pt}
\centering
\scriptsize
\setlength{\tabcolsep}{2.5pt}
\renewcommand{\arraystretch}{1.1}

\begin{tabular}{>{\centering\arraybackslash}p{0.27\linewidth}|
                >{\centering\arraybackslash}p{0.1\linewidth}|
                >{\centering\arraybackslash}p{0.1\linewidth}
                >{\centering\arraybackslash}p{0.1\linewidth}
                >{\centering\arraybackslash}p{0.1\linewidth}|
                >{\centering\arraybackslash}p{0.1\linewidth}}
\hline
Method & GLUE & Training Time & NAS Search & HP Tuning & Total Time \\ [1ex]
\hline\hline
LoRA & 0.882 & \multirow{2}{*}{$\sim$2.5} & - & \multirow{2}{*}{$\sim$15.5} & $\sim$18 \\[1ex]
PEML (LoRA) & 0.901 &  & $\sim$0.5 &  & $\sim$18.5 \\[1ex]
\hline
AdaLoRA & 0.909 & \multirow{2}{*}{$\sim$2.5} & - & \multirow{2}{*}{$\sim$15.5} & $\sim$18 \\[1ex]
PEML (AdaLoRA) & 0.915 &  & $\sim$0.5 &  & $\sim$18.5 \\[1ex]
\hline
\end{tabular}

\captionof{table}{Compute breakdown (GPU-hours) and GLUE scores on LLaMa2-7b.}
\label{table:16}

\end{minipage}
\hfill
\begin{minipage}[t]{0.48\textwidth}

\vspace{9pt}

\centering
\includegraphics[width=\linewidth]{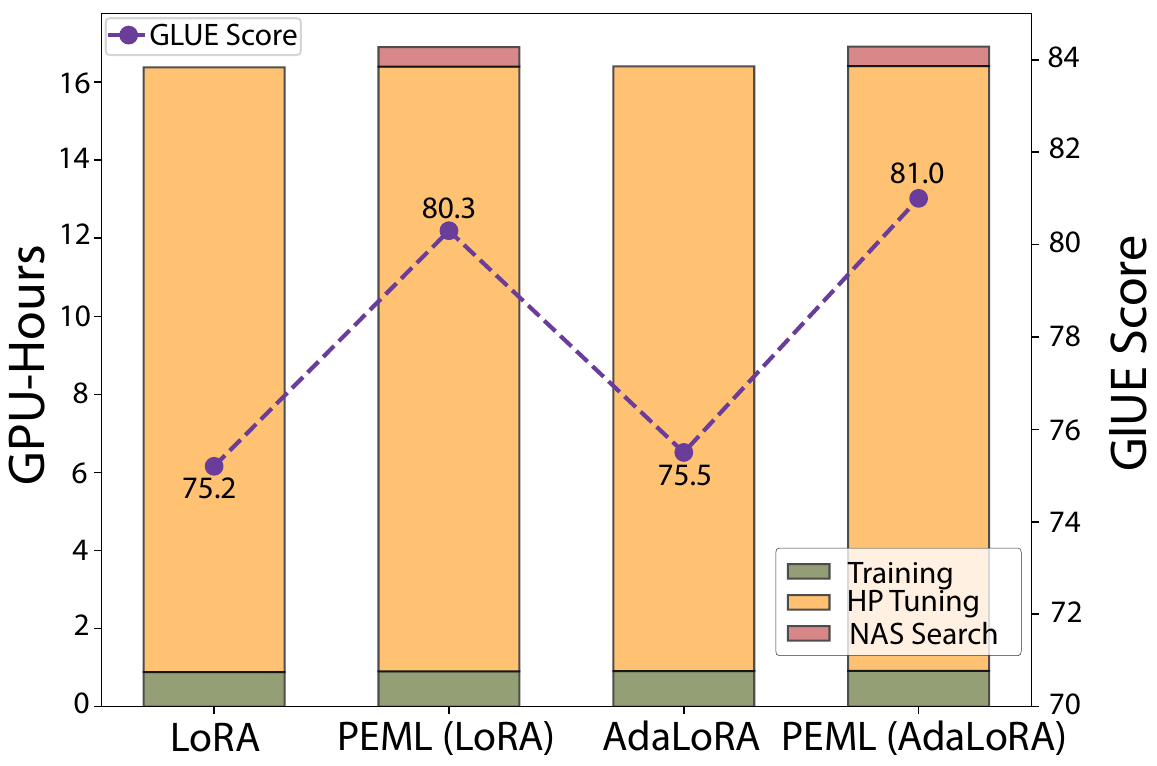}
\caption{Computational cost and GLUE performance of PEFT methods.}
\label{fig6}
\end{minipage}

\end{figure}



Another challenge in comparing PEML to baselines such as MTL-LoRA is that these methods report only the final training time, despite requiring substantial hyperparameter sweeps to achieve competitive performance. For instance, MTL-LoRA evaluates robustness across multiple hyperparameters, including the number of up-projection matrices ($n$), the temperature coefficient ($\tau$), and task-specific transformations ($\Lambda_t$), meaning the reported results already include extensive search effort. Similarly, default parameters used in the baselines are the outcome of extensive prior tuning \cite{Edward2021}. In contrast, PEML explicitly accounts for its NAS search cost. Table~\ref{table:15} compares training and search costs of PEML and MTL-LoRA on LLaMa2-7b using the GLUE benchmark. Experiments were conducted under identical conditions, including the same hardware, benchmark, and search space. Despite performing an explicit NAS search, PEML achieves higher GLUE performance while requiring slightly less total time than MTL-LoRA. 

\begin{table}[ht]
\centering
\scriptsize
\caption{Comparison of PEML and MTL-LoRA training and search costs (LLaMa2-7b, GLUE benchmark). All times are in GPU-hours.}
\label{table:15}
\setlength{\tabcolsep}{4pt} 
\renewcommand{\arraystretch}{1.1} 
\begin{tabular}{>{\centering\arraybackslash}p{0.12\linewidth}|
                >{\centering\arraybackslash}p{0.08\linewidth}|
                >{\centering\arraybackslash}p{0.08\linewidth}
                >{\centering\arraybackslash}p{0.08\linewidth}
                >{\centering\arraybackslash}p{0.1\linewidth}
                >{\centering\arraybackslash}p{0.08\linewidth}|
                >{\centering\arraybackslash}p{0.08\linewidth}}
\hline
Method & GLUE & Training Time & NAS Search & Task-specific HP Search & HP Tuning & Total \\[1ex]
\hline
\hline
MTL-LoRA & 0.879 & \multirow{2}{*}{$\sim$2.5} & - & $\sim$1.5 & \multirow{2}{*}{$\sim$15.5} & $\sim$19.5 \\ [1ex]
PEML     & 0.901 &  & $\sim$0.5 & - &  & $\sim$18.5 \\[1ex]
\hline
\end{tabular}
\end{table}

\subsection{Task-Specific Analysis: Prompt vs. Weight Adaptation Benefits}

We analyze task sensitivity to prompt optimization versus weight adaptation. Reasoning tasks (COPA, RTE) benefit more from prompts (COPA: +0.075 from prompts vs. +0.025 from weights), while classification tasks (SST-2, MRPC) favor weight adaptation (SST-2: +0.081 from weights vs. +0.022 from prompts). Unified integration provides complementary benefits: adding LoRA to prompt-sensitive COPA yields extra +0.025, while prefix alignment adds +0.022 beyond LoRA on SST-2. However, unified adaptation does not always surpass specialized single methods, AdaLoRA alone (0.908) slightly outperforms PEML (0.901). The exception of HellaSwag (PEML 77.4\% vs. MTL-LoRA 93.1\%) shows task-specific limitations, as neither prompts nor standalone LoRA approach MTL-LoRA's performance. PEML's value lies in consistent improvements over base LoRA (+3.1\% on GLUE, +3.2\% on SuperGLUE) and balanced adaptation across diverse tasks.

\subsection{Additional Parameters}
PEML introduces a dynamic architecture search mechanism where the number of additional parameters is not predetermined but instead depends on the trajectory of the search and the sub-architectures selected during training. In practice, the overhead remains relatively small, typically within 2\% to 8\% of the base model’s parameters. 

\subsection{Additional Results}
\subsubsection{Comparison with Hypernetwork-Inspired and Modular PEFT Methods}

We also evaluate PEML on LLaMA2-7B under the standard GLUE multi-task setup and compare it against different PEFT methods, including hypernetwork-inspired approaches (e.g., HyperFormer, HyperLoRA, HyperPrompt), mixture-based and modular PEFT methods (e.g., LoRAMoE, LoRAHUB, UniPELT). As shown in Table~\ref{hypernet_glue}, PEML achieves the best overall performance with an average score of 0.891, outperforming all compared baselines on the GLUE benchmark. In particular, PEML yields strong gains on MNLI and QNLI, where it consistently surpasses both prompt-based and modular adaptation methods. While some baselines perform well on individual tasks (e.g., LoRAMoE on SST-2 and RTE), their results vary across tasks, reflecting limited robustness in multi-task settings.

\begin{table}[ht]
\begin{center}
\caption{Comparison of PEML with hypernetwork-based and hypernetwork-inspired PEFT methods on the GLUE benchmark using LLaMA2-7B under standard multi-task settings.}
\scriptsize
\begin{tabular}{ >{\centering\arraybackslash} p{0.28\linewidth} | 
                >{\centering\arraybackslash}p{0.044\linewidth} 
                >{\centering\arraybackslash}p{0.044\linewidth} 
                >{\centering\arraybackslash}p{0.044\linewidth} 
                >{\centering\arraybackslash}p{0.044\linewidth} 
                >{\centering\arraybackslash}p{0.048\linewidth} 
                >{\centering\arraybackslash}p{0.044\linewidth} 
                >{\centering\arraybackslash}p{0.044\linewidth} 
                >{\centering\arraybackslash}p{0.044\linewidth} |
                >{\centering\arraybackslash}p{0.054\linewidth}}

\hline
Method & CoLA & SST-2 & MRPC & QQP & STS-B & MNLI & QNLI & RTE & AVG \\
\hline
\hline
HyperPrompt (\cite{he2022hyperprompt}) & 0.563 & 0.937 & 0.917 & 0.876 & 0.903 & 0.911 & 0.949 & 0.876 & 0.867 \\ [0.5ex]
HyperFormer (\cite{mahabadi2021parameter})  & 0.579 & 0.948 & 0.899 & 0.879 & 0.915 & 0.898 & 0.945 & 0.878 & 0.868 \\ [0.5ex]
HyperLoRA (\cite{hyperlora})    & 0.688 & 0.964 & 0.926 & 0.879 & 0.928 & 0.895 & 0.942 & 0.891 & 0.889 \\ [0.5ex]
LoRAHUB (\cite{huang2023lorahub})     & 0.653 & 0.926 & 0.841 & 0.889 & 0.903 & 0.917 & 0.946 & 0.878 & 0.869 \\ [0.5ex]
LoRAMoE (\cite{dou2024loramoe})      & 0.637 & 0.966 & 0.855 & 0.906 & 0.922 & 0.912 & 0.957 & 0.921 & 0.884 \\ [0.5ex]
UniPELT (\cite{mao2022unipelt})     & 0.631 & 0.921 & 0.879 & 0.859 & 0.887 & 0.839 & 0.913 & 0.729 & 0.832 \\ [0.5ex]
\hline
PEML         & 0.698 & 0.967 & 0.848 & 0.905 & 0.892 & 0.964 & 0.952 & 0.899 & 0.891 \\ [0.5ex]
\hline

\end{tabular}
\label{hypernet_glue}
\end{center}
\end{table}

\subsubsection{Comparison to Simpler Alternatives}

\begin{wrapfigure}{r}{0.48\textwidth}
\centering
\scriptsize
\begin{tabular}{ >{\centering\arraybackslash} p{0.32\linewidth} |
                 >{\centering\arraybackslash}p{0.12\linewidth}
                 >{\centering\arraybackslash}p{0.14\linewidth}
                 >{\centering\arraybackslash}p{0.12\linewidth} }

\hline
Method & GLUE & SuperGLUE & MMLU \\
\hline
\hline
LoRA ($r=96$)   & 0.851 & 0.786 & 0.752 \\
LoRA ($r=192$)  & 0.863 & 0.792 & 0.753 \\
PrefixLayer 3   & 0.792 & 0.753 & 0.688 \\
PrefixLayer 6   & 0.798 & 0.764 & 0.674 \\
\hline
PEML (ours)     & \textbf{0.901} & \textbf{0.842} & \textbf{0.803} \\
\hline

\end{tabular}
\captionof{table}{Performance comparison of PEML with high-rank LoRA and manual prefix baselines across GLUE, SuperGLUE, and MMLU benchmarks.}
\label{table:14}
\end{wrapfigure}

Both prior work and our experiments show that simply increasing LoRA rank or manually designing prefix layers is insufficient to match PEML’s performance. The original LoRA study [1, Table 6] demonstrates that on GPT-3 175B for WikiSQL, a minimal rank of (r=1) achieves 73.4\% accuracy, while increasing to (r=64) yields essentially no improvement (73.5\%), illustrating diminishing returns. Similarly, on LLaMa2-7B (Table 3), raising LoRA rank from (r=96) to (r=160) improves average score by only 0.1 points (75.2 → 75.3). Manually increasing prefix depth also provides minimal benefit, e.g., GLUE improves only from 0.792 → 0.798 when prefix layers are increased from 3 to 6. Table~\ref{table:14} compares PEML against high-rank LoRA and manual prefix baselines across GLUE, SuperGLUE, and MMLU, showing that neither strategy achieves PEML’s gains. These results indicate that PEML’s improvements arise from the sophisticated architectures discovered via PrefixNAS, rather than merely scaling parameters.

\subsubsection{Performance Analysis on Low Data Settings}
\label{lowdata}
We also evaluated cross-task knowledge transfer by training on a small sample from all tasks, observing comparable results as shown in Table \ref{table:8} and Table \ref{table:7}. We evaluated the performance of various PEFT techniques, including PreEmbedd, LoRA, and AdaLoRA, along with their combinations with \PM, on GLUE and SuperGLUE benchmarks. PEML achieved the highest average accuracy with LoRA and AdaLoRA on both benchmark in resource-constrained scenarios.

\begin{table}[ht]
\begin{center}
\caption{Performance of various PEFT methods, including PreEmbedd, LoRA, AdaLoRA, and their combinations with PEML, tested on the T5-large model on GLUE benchmark. Results are reported for low data (500 samples from each task) settings.}
\scriptsize
 \begin{tabular} { >{\centering\arraybackslash} p{0.27314\linewidth} |  >{\centering\arraybackslash}p{0.025\linewidth} >{\centering\arraybackslash}p{0.04\linewidth} >{\centering\arraybackslash}p{0.025\linewidth} >{\centering\arraybackslash}p{0.034\linewidth} >{\centering\arraybackslash}p{0.025\linewidth} >{\centering\arraybackslash}p{0.035\linewidth} >{\centering\arraybackslash}p{0.055\linewidth}  | >{\centering\arraybackslash}p{0.038\linewidth} 
 >{\centering\arraybackslash}p{0.055\linewidth}}
 \hline
    Peft Techniques & SST2 &MRPC	&RTE &COLA	&QQP	&WNLI	& QNLI & AVG \\ [1ex]
  \hline
  \hline
    PreEmbedd  & 0.855 & 0.811 & 0.888	& 0.806 & 0.936 & 0.687 & 0.856 & 0.834\\ [0.5ex]

    LoRA  & 0.932 & 0.871 & 0.891 & 0.859 & 0.944 & 0.718 & 0.940 & 0.880\\ [0.5ex]

    LoRA-PreEmbedd  & 0.938 & 0.863 & 0.910 & 0.875 & 0.944  & 0.718 & 0.939 & 0.884\\ [0.5ex]
    \hline
    \PM & 0.952 & 0.932 & 0.942 & 0.890 & 0.942 & 0.750 & 0.952 & 0.908\\ [0.5ex]
    \hline
    \hline
    AdaLoRA  & 0.932 & 0.895 & 0.896 & 0.871 & 0.950 & 0.734 & 0.952 & 0.890\\ [0.5ex] 

    AdaLoRA-PreEmbedd  & 0.916 & 0.891 & 0.893 & 0.846 & 0.945 & 0.750 & 0.946 & 0.884\\ [0.5ex]
    \hline
    \PM-AdaLoRA  & 0.968 & 0.907 & 0.920 & 0.835 & 0.972 & 0.796 & 0.962 & 0.904\\ [0.5ex]

  \hline
\end{tabular}
\label{table:8}
\end{center}
\end{table}

\begin{table}[ht]
\begin{center}
\caption{performance of various PEFT methods, including PreEmbedd, LoRA, AdaLoRA, and their combinations with \PM\, tested on the T5-large model on SuperGLUE tasks. Results are reported for low data (300 samples from each task) efficiency settings, with the \PM\ achieving the highest average performance.}
\scriptsize
 \begin{tabular} { >{\centering\arraybackslash} p{0.274\linewidth} |    >{\centering\arraybackslash}p{0.04\linewidth} >{\centering\arraybackslash}p{0.04\linewidth} >{\centering\arraybackslash}p{0.04\linewidth} >{\centering\arraybackslash}p{0.06\linewidth} >{\centering\arraybackslash}p{0.04\linewidth}   >{\centering\arraybackslash}p{0.055\linewidth} | 
 >{\centering\arraybackslash}p{0.04\linewidth}}
 \hline
    Peft Techniques  & Boolq &RTE	&COPA &MultiRC	&WIC	&WSC & AVG \\ [1ex]
  \hline
  \hline
    PreEmbedd& 0.911 & 0.891 & 0.750 & 0.877 & 0.781 & 0.875 & 0.847\\ [0.5ex]

    LoRA    & 0.919 & 0.935 & 0.750 & 0.889 & 0.810 & 0.732 & 0.839\\ [0.5ex]

    LoRA \& PreEmbedd   & 0.922 & 0.931 & 0.800 & 0.890 & 0.814 & 0.714 & 0.845\\ [0.5ex]
    \hline
    
    \PM   & 0.912 & 0.871 & 0.925 & 0.876 & 0.829 & 0.767 & 0.863\\ [0.5ex]
    \hline
    \hline
    AdaLoRA   & 0.912 & 0.907 & 0.750 & 0.871 & 0.785 & 0.767 & 0.832\\ [0.5ex] 

    AdaLoRA \& PreEmbedd   & 0.897 & 0.895 & 0.725 & 0.878 & 0.791 & 0.767 & 0.826\\ [0.5ex]
    \hline
    \PM-AdaLoRA   & 0.922	&0.943	&0.825	&0.893	&0.840	&0.803	&0.871\\ [0.5ex]

  \hline
\end{tabular}
\label{table:7}
\end{center}
\end{table}

\subsubsection{Performance Analysis on GLM-10B}
As detailed in Table \ref{table:9}, we also benchmarked \PM\ against other PEFT methods on the GLM-10B model. Our approach demonstrates clear superiority by achieving the highest accuracy on SuperGLUE benchmark with an average score of 0.631.
\begin{table}[!ht]
\begin{center}
\caption{Compare the performance accuracy of \PM\ using GLM-10B \cite{du2021glm} model with state-of-the-art multitasking framework. All the results are directly reported from \cite{wang2023C}.}
\scriptsize
 \begin{tabular} { >{\centering\arraybackslash} p{0.16\linewidth} | >{\centering\arraybackslash}p{0.05\linewidth}   >{\centering\arraybackslash}p{0.05\linewidth} >{\centering\arraybackslash}p{0.05\linewidth} >{\centering\arraybackslash}p{0.05\linewidth} >{\centering\arraybackslash}p{0.05\linewidth} >{\centering\arraybackslash}p{0.05\linewidth}     >{\centering\arraybackslash}p{0.05\linewidth} |
>{\centering\arraybackslash}p{0.05\linewidth}}
 \hline
    Peft Techniques & Boolq	&CB	&COPA	&MultiRC	&RTE	&WIC	&WSC	&AVG\\ [1ex]
    \hline
    \hline
    LoRA	& 0.609	& 0.463	& 0.657	& 0.624	& 0.573	& 0.391	& 0.323	& 0.520 \\ [0.8ex]
                              
    MOE-LoRA	& 0.633	& 0.450	 & 0.634	& 0.640	& 0.612	& 0.403	& 0.396 & 0.538 \\ [0.8ex]       
    
    Poly	& 0.646	&  0.521& 0.655	& 0.656	& 0.621	& 0.417	& 0.470 & 0.569 \\ [0.8ex]                        
    
    MHR	& 0.648	& 0.507	& 0.663	& 0.657	& 0.627	& 0.423 & 0.455	& 0.569 \\ [0.8ex] 
    
    C-Poly	& 0.673	& 0.603	& 0.704	& 0.679	& 0.680	& 0.487	& 0.534	& 0.622 \\ [0.8ex] 

    \hline
    \PM   & \textbf{0.682}	& \textbf{0.610}	& \textbf{0.707}	&\textbf{0.688}	&\textbf{0.693}	&\textbf{0.491}& \textbf{0.552}	&\textbf{0.631}\\ [0.8ex] 
    
  \hline
\end{tabular}
\label{table:9}
\end{center}
\end{table}

\subsubsection{NAS Comparison: PrefixNAS vs Two-stage vs Surrogate NAS}

\begin{wrapfigure}{r}{0.36\textwidth}
\vspace{-21pt}
\centering
    \includegraphics[width=0.9\linewidth]{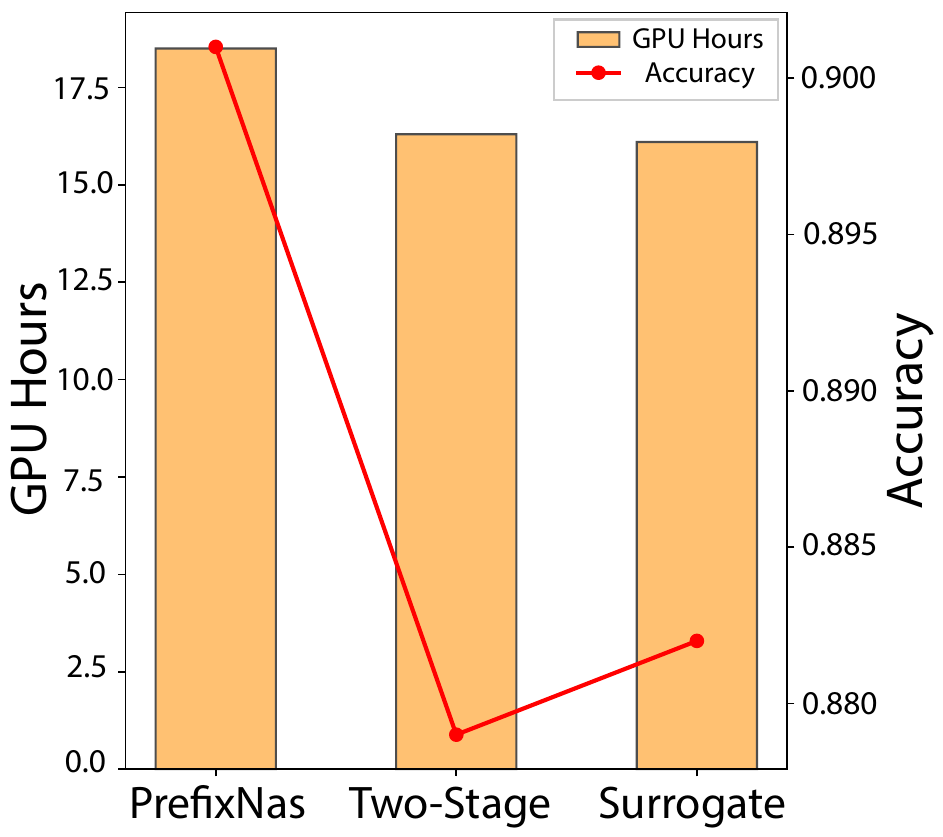}
    \caption{Compute accuracy trade-off for PrefixNAS vs Two-stage vs surrogate-based NAS.}
    \label{fig:7}
    \vspace{-5pt}
\end{wrapfigure}

Exploring two-stage or surrogate-based strategies for PrefixNAS can moderately reduce GPU usage by performing the architecture search on a smaller proxy or using an approximate performance predictor. As shown in Figure \ref{fig:7}, this approach decreases GPU consumption from 18.5 GPU-hours for full PrefixNAS to 16.3 and 16.1 GPU-hours, respectively. However, this computational saving comes with a slight drop in accuracy: full PrefixNAS achieves 0.901, while two-stage and surrogate-based NAS reach 0.879 and 0.882. These results suggest that, although these approximate strategies offer moderate efficiency gains, the full PrefixNAS search remains more effective in identifying optimal prefix architectures, highlighting the trade-off between computation and performance.

\subsubsection{Architecture Transferability of PrefixNAS}
\label{AT}

To evaluate whether PrefixNAS discovers generalizable structures rather than overfitting to specific benchmarks, we tested the transferability of architectures found on GLUE, SuperGLUE, and MMLU to other datasets. We measured performance under three settings: NAS with fine-tuning, no NAS but only fine-tuning, and no NAS without fine-tuning. As shown in Table~\ref{table:13}, direct transfer without fine-tuning results in substantial drops (13–19\%), e.g., GLUE $\rightarrow$ SuperGLUE drops from 0.911 to 0.782, indicating that the architectures encode broadly useful structures. With only fine-tuning, performance remains very close to the NAS-discovered architectures, e.g., GLUE $\rightarrow$ SuperGLUE: 0.911 $\rightarrow$ 0.904, demonstrating that PrefixNAS can identify robust and generalizable prefix architectures that transfer effectively among similar tasks, thereby reducing search costs in practice.

\begin{table}[ht]
\centering
\scriptsize
\caption{Transfer Performance of NAS-Discovered Prefix Architectures Across Datasets. Results confirm that PrefixNAS does not overfit to specific benchmarks. Fine-tuning alone on transferred architectures achieves performance close to full NAS+fine-tuning, while no-fine-tuning drops confirm the search finds broadly useful rather than dataset-specific structures.}
\label{table:13}
\setlength{\tabcolsep}{4pt} 
\renewcommand{\arraystretch}{1.1} 
\begin{tabular}{>{\centering\arraybackslash}p{0.2\linewidth}|
                >{\centering\arraybackslash}p{0.12\linewidth}
                >{\centering\arraybackslash}p{0.12\linewidth}
                >{\centering\arraybackslash}p{0.12\linewidth}}
\hline
\textbf{Transfer Direction} & \textbf{NAS + Fine-tuning} & \textbf{No NAS, Only Fine-tuning} & \textbf{No NAS, No Fine-tuning} \\[1ex]
\hline 
\hline
GLUE $\rightarrow$ SuperGLUE & 0.879 & 0.887 & 0.743 (-15.5\%) \\[1ex]
SuperGLUE $\rightarrow$ GLUE & 0.911 & 0.904 & 0.782 (-14.1\%) \\[1ex]
GLUE $\rightarrow$ CR & 0.830 & 0.821 & 0.718 (-13.5\%) \\[1ex]
SuperGLUE $\rightarrow$ MMLU & 0.803 & 0.798 & 0.691 (-13.9\%) \\[1ex]
MMLU $\rightarrow$ SuperGLUE & 0.879 & 0.877 & 0.713 (-18.9\%) \\[1ex]
MMLU $\rightarrow$ GLUE & 0.911 & 0.912 & 0.776 (-14.9\%) \\[1ex]
GLUE $\rightarrow$ MMLU & 0.803 & 0.816 & 0.684 (-14.9\%) \\[1ex]
\hline
\end{tabular}
\end{table}



\subsubsection{Benchmark Comparison: DeepSeek 7B vs Qwen2-7B vs LLaMA3-8B}

We selected our primary models to ensure fair comparison with prior work, since they are widely used and well-established in the parameter-efficient tuning literature. Most baseline studies rely on these models, making them natural choices for consistent evaluation. While newer models such as LLaMa3-8B \cite{grattafiori2024llama} , DeepSeek-LLM-7B \cite{bi2024deepseek}, and Qwen2-7B \cite{qwen2} show strong potential, we opted for LLaMa2-7B due to its maturity, stable training dynamics, and broad community support. Nonetheless, we also evaluated our PEML approach on several state-of-the-art models across GLUE, SuperGLUE, MMLU, and Commonsense Reasoning (CR) benchmarks in table \ref{table:11}.

\begin{table}[!ht]
\begin{center}
\caption{Benchmark performance of recent LLMs on MMLU, GLUE, SuperGLUE, and Commonsense Reasoning (CR). Average accuracy across all benchmarks is reported for each model.}
\scriptsize
 \begin{tabular} { >{\centering\arraybackslash} p{0.22\linewidth} | >{\centering\arraybackslash}p{0.07\linewidth}   >{\centering\arraybackslash}p{0.07\linewidth} >{\centering\arraybackslash}p{0.07\linewidth} >{\centering\arraybackslash}p{0.07\linewidth} }
 \hline
    SOTA Models & MMLU & GLUE & SuperGLUE & CR \\ [1ex]
    \hline
    \hline
    LLaMA3-8B   & 0.662  & 0.922 & 0.915  & 0.821  \\ [0.8ex]
    DeepSeek-7B  & 0.569 & 0.910 & 0.897 & 0.834 \\ [0.8ex]
    Qwen2-7B   & 0.694  & 0.904 & 0.882 & 0.809  \\ [0.8ex]

  \hline
\end{tabular}
\label{table:11}
\end{center}
\end{table}

\subsubsection{Heterogeneous Multi-Task Evaluation}

To evaluate PEML's performance across fundamentally different task types, we conducted a heterogeneous multi-task experiment jointly training LLaMA2-7B on three distinct tasks: SST-2 (sentiment classification), GSM8K (mathematical reasoning) \cite{cobbe2021training}, and CNN/DailyMail (text summarization) \cite{see-etal-2017-get}, with each task contributing equally during training through round-robin sampling from their respective datasets. We employed accuracy for SST-2 and GSM8K, and ROUGE-L for CNN/DailyMail summarization. For evaluation, SST-2 used its standard validation set, GSM8K employed 5-shot prompting on the test set, and CNN/DailyMail measured ROUGE-L on the validation set with beam search (beam=4). 

\begin{table}[ht]
\centering
\scriptsize
\caption{Performance on Heterogeneous Task Mixture (LLaMA2-7B)}
\label{tab:heterogeneous}
\vspace{0.3cm}
\begin{tabular}{c|ccc}
\hline
\textbf{Method} & \textbf{SST-2 (Acc) ↑} & \textbf{GSM8K (Acc) ↑} & \textbf{CNN/DM (R-L) ↑}  \\ [0.8ex]
\hline
LoRA & 0.892 & 0.281 & 0.243  \\ [0.8ex]
PreEmbedd & 0.845 & 0.267 & 0.229  \\ [0.8ex]
PEML (ours) & \textbf{0.904} & \textbf{0.288} & \textbf{0.268} \\ [0.8ex]
\hline
\end{tabular}
\end{table}

Architectural analysis revealed that PrefixNAS learned distinct computational patterns for each task type: for SST-2, it predominantly selected ReLU activation with moderate dropout (0.3); for GSM8K, it favored Tanh activation with layer normalization, potentially stabilizing numerical reasoning pathways; and for CNN/DailyMail, it frequently chose GELU activation with skip connections. All tasks were trained simultaneously with shared parameters except for task-specific LoRA matrices and PrefixNAS-optimized continuous prompts, enabling knowledge transfer while maintaining task-specific adaptation through our unified architectural framework, with the learned architectures demonstrating PEML's ability to discover specialized computational structures beyond simple prompt tuning.

\subsubsection{Comparison of Sequential and Parallel Optimization}
\label{SP}
We further investigate the effect of combining LoRA and Prefix Tuning under different optimization approaches. Table \ref{table:10} reports the performance on five representative SuperGLUE tasks. We consider three settings: (i) sequential optimization where LoRA is trained first followed by Prefix Tuning, (ii) sequential optimization in reverse order where Prefix Tuning is trained first followed by LoRA, and (iii) parallel optimization where LoRA and Prefix Tuning are jointly optimized (\PM). Results demonstrate that parallel optimization consistently yields higher average performance, while sequential variants show task-specific variations.

\begin{table}[ht]
\begin{center}
\caption{Performance comparison of sequential and parallel optimization of LoRA and Prefix Tuning across SuperGLUE tasks. Parallel optimization achieves the highest average performance.}
\scriptsize
 \begin{tabular} { >{\centering\arraybackslash} p{0.22\linewidth} | >{\centering\arraybackslash}p{0.05\linewidth}   >{\centering\arraybackslash}p{0.05\linewidth} >{\centering\arraybackslash}p{0.05\linewidth} >{\centering\arraybackslash}p{0.05\linewidth} >{\centering\arraybackslash}p{0.05\linewidth}  |
>{\centering\arraybackslash}p{0.05\linewidth}}
 \hline
    Approches & BoolQ & COPA & RTE & WSC & WiC & Avg \\ [1ex]
    \hline
    \hline
    LoRA $\rightarrow$ Prefix            & 0.929 & 0.855 & 0.922 & 0.750 & 0.835 & 0.858 \\ [0.8ex]
    Prefix $\rightarrow$ LoRA            & 0.922 & 0.835 & 0.938 & 0.735 & 0.864 & 0.859 \\ [0.8ex]
    LoRA $\parallel$ Prefix (\PM)   & 0.925 & 0.850 & 0.932 & 0.804 & 0.837 & \textbf{0.869} \\ [0.8ex]
  \hline
\end{tabular}
\label{table:10}
\end{center}
\end{table}

\subsection{Ablation Study}

We conduct ablation studies to evaluate the impact of three critical aspects in \PM: the optimization order in \PM\, the number of layers (n), and the search space operations within PrefixNAS. For each experiment, we use T5-large model and train it on the SuperGLUE dataset. (1) We observe that parallel optimization consistently outperforms sequential optimization whether LoRA is followed by PrefixNAS or vice versa. Moreover, sequential order has more additional training overhead than the parallel order ($\sim$ see Appendix \ref{SP}). (2)  We increase the number of layers up to 8 but beyond n=6 does not yield a significant performance boost ($\sim$ see Figure \ref{fig3}). However, A more complex architecture with additional layers could theoretically improve performance but it also introduces a large number of trainable parameters. Therefore, we must find a balance between architectural complexity and the associated computational cost. Lastly, (3) We find that certain operations did not have any major contribution to the PrefixNAS architecture. We exclude these operations during our search space design ($\sim$ see Figure \ref{fig3}). This reduction in search space results in significant time savings during the search process.

\subsection{Inference Latency Induced by Adapter Switching}

Inference latency in multi-task setting is affected by PEFT adapter switching. For 100 tasks with separate adapters, each task's latency is the forward pass $t_f$ plus switch time $t_s$, giving total latency $T = 100 \times (t_f + t_s)$. In PEML, a single unified adapter removes switching ($t_s = 0$), so $T_{\text{PEML}} = 100 \times t_f$. Using empirical estimates with minor variability, T5-large has $t_f = 11$ ms, average $t_s \approx 2.1$ ms, resulting in 1,320 ms for 100 adapters vs 1,100 ms with PEML ($\sim 17\%$ reduction); LLaMA2-7B has $t_f = 52$ ms, average $t_s \approx 4.3$ ms, giving 5,630 ms vs 5,200 ms ($\sim 8\%$ reduction). The switching overhead scales linearly with the number of tasks, so PEML's unified-adapter design becomes increasingly advantageous for larger multi-task setups, maintaining efficiency without compromising forward-pass computation.

\subsection{\PM\ vs. MultiLoRA: VRAM-Efficient Scaling}
\label{vram}
Training throughput, and VRAM usage are critical for generative LLMs. MultiLoRA increases model capacity by adding multiple parallel LoRA modules ($n>1$), which linearly increases VRAM usage due to activation caching, especially for long sequences. For example, training LLaMA-7B with sequences of 1024 tokens and  n=5 modules can consume more VRAM than full-parameter fine-tuning, limiting practical scalability. In contrast, our proposed PEML maintains a single LoRA module (n=1) and increases the rank r to match or exceed the expressivity of MultiLoRA. This design avoids horizontal scaling entirely and VRAM usage remains nearly constant regardless of the rank increase. Additionally, MultiLoRA and \PM\ do not introduce notable latency and the throughput remains close to around 400 tokens per GPU per second ($\sim$ see figure \ref{fig4}). In our benchmarking, the throughput of \PM\ is almost twice that of full parameter fine-tuning (208 tokens per GPU per second)

\begin{figure*}[!ht]
\centering
    \includegraphics[width=0.8\textwidth]{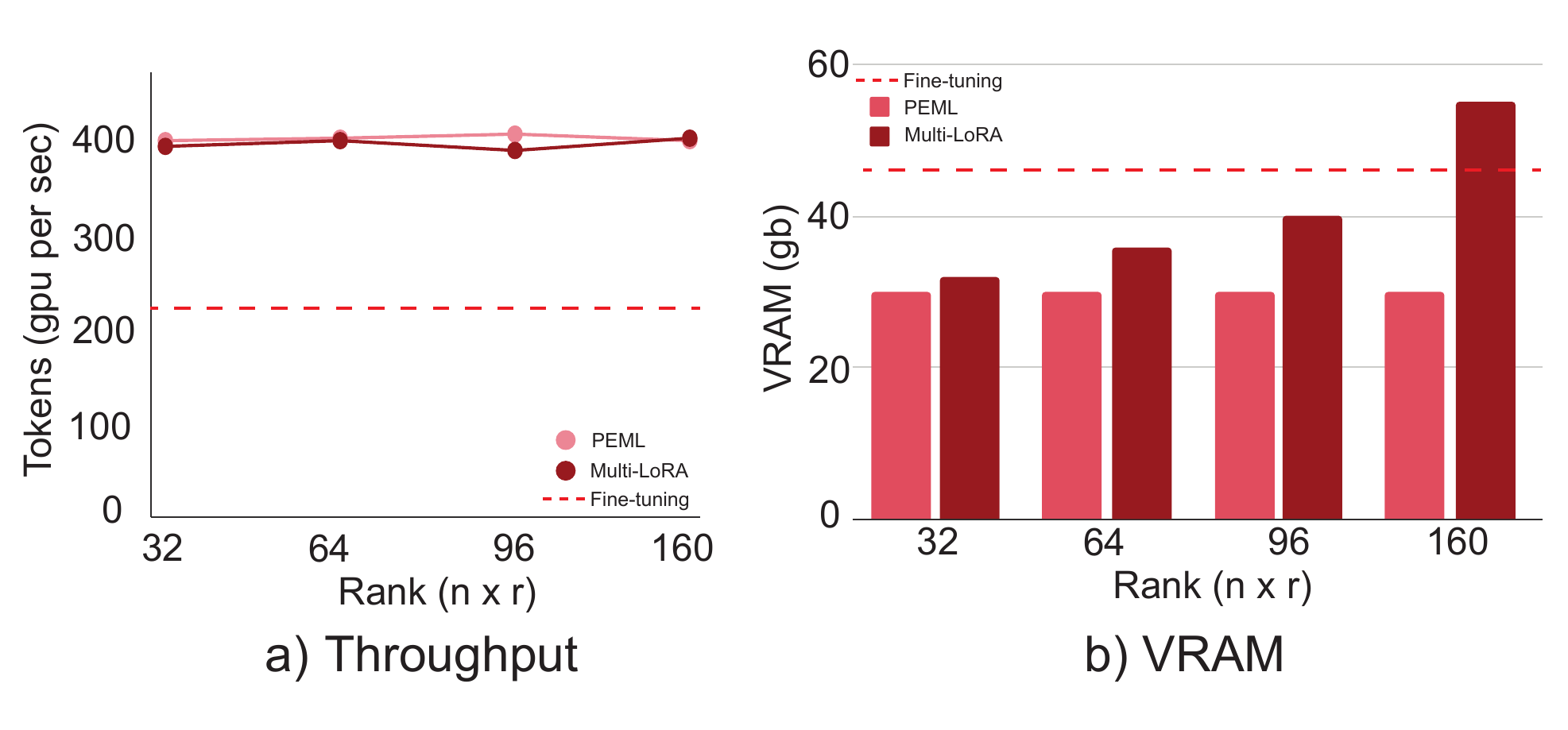}
    \caption{ (a) Throughput and (b) peak VRAM usage benchmarked when training LLaMA-7B with sequences of 1024
tokens. n × r on horizontal axis indicates total rank of MultiLoRA and \PM.}
    \label{fig4}
\end{figure*}

\subsection{Stability of the Differentiable–Discrete Transition}

PrefixNAS conducts architecture search using a continuous relaxation and finalizes the architecture via an argmax operation \ref{eq:7}. To evaluate the stability of this transition, we compare Softmax + Argmax with Gumbel-Softmax \cite{liu2023bridging} and Straight-Through Estimators (STE) \cite{huijben2022} in terms of gradient norm, architectural parameter variance, and performance consistency after discretization. As shown in Table \ref{table:12}, our method yields the lowest gradient norm (0.21 ± 0.05) and parameter variance (0.014 ± 0.006), indicating stable gradient-based optimization during search. We further measure the discretization gap, defined as the performance drop when moving from the relaxed to the final discrete architecture. Softmax + Argmax exhibits a small gap of 0.7, demonstrating that the searched architecture remains stable after discretization. While Gumbel-Softmax shows a slightly smaller gap (0.6), it results in lower final accuracy, and STE exhibits both higher gradient noise and a substantially larger gap (1.1).

\begin{figure*}[ht]
\centering
    \includegraphics[width=0.99\textwidth]{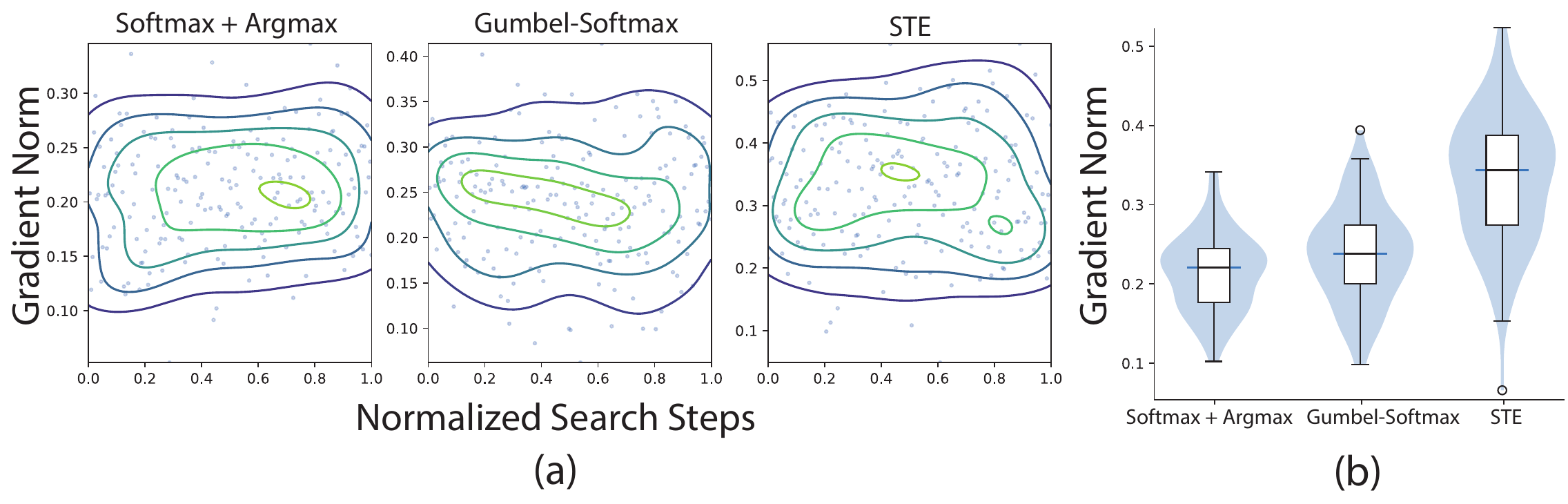}
    \caption{ (a) KDE contours of architectural gradient norms during search. Softmax + Argmax shows tighter concentration, indicating stable optimization. and (b) Distribution of architectural gradient norms during search. Softmax + Argmax exhibits tighter variance, indicating more stable optimization.}
    \label{fig5}
\end{figure*}

\label{DDT}
\begin{table}[ht]
\centering
\scriptsize
\caption{Comparison of relaxation techniques for architecture search. Lower gradient norm, architectural parameter variance ($\theta$ Var), and discretization gap indicate more stable optimization and reliable transition from continuous to discrete architectures.}
\label{table:12}
\setlength{\tabcolsep}{4pt} 
\renewcommand{\arraystretch}{1.1} 
\begin{tabular}{>{\centering\arraybackslash}p{0.2\linewidth}|
                >{\centering\arraybackslash}p{0.1\linewidth}
                >{\centering\arraybackslash}p{0.13\linewidth}
                >{\centering\arraybackslash}p{0.14\linewidth}
                >{\centering\arraybackslash}p{0.15\linewidth}}
\hline
Method & Grad Norm $\downarrow$ & $\theta$ Var $\downarrow$ & GLUE Avg (\%) $\uparrow$ & Discretization Gap $\downarrow$ \\[1ex]
\hline
\hline
Softmax + Argmax & \textbf{0.21 $\pm$ 0.05} & \textbf{0.014 $\pm$ 0.006} & \textbf{90.1} & 0.7 \\ [1ex]
Gumbel-Softmax   & 0.24 $\pm$ 0.06          & 0.017 $\pm$ 0.007          & 89.2          & \textbf{0.6} \\ [1ex]
STE              & 0.33 $\pm$ 0.09          & 0.031 $\pm$ 0.011          & 88.5          & 1.1 \\
\hline
\end{tabular}
\end{table}

In addition, we provide a box plot \ref{fig5} of architectural gradient norms to illustrate the distributional stability during search and a Kernel Density Estimation (KDE) contour plot \ref{fig5} visualizes the temporal concentration of gradients across search steps, further confirming stable and consistent optimization behavior.

\subsection{LLM Usage}

We used a large language model (LLM) solely for writing polish, such as improving grammar, clarity, and readability of the text. The LLM did not contribute to research ideation, methodology, analysis, or results. All scientific content and conclusions are the responsibility of the authors.

\newpage

\end{document}